\documentclass[sigconf]{acmart}

\usepackage{graphicx}
\usepackage{subcaption}
\usepackage{caption}
\usepackage{multirow}
\usepackage{amsmath}
\usepackage{amsthm}

\usepackage{algorithm}
\usepackage{algpseudocode}



\newtheorem{definition}{Definition}
\newtheorem{lemma}{Lemma}
\newtheorem{theorem}{Theorem}
\newtheorem{remark}{Remark}
\begin{document}

\title{Gradient Inversion in Federated Reinforcement Learning}

\author{Shenghong He}
\authornote{Both authors contributed equally to this research.}
\email{tianleh6@gmail.com}
\orcid{1234-5678-9012}
\author{G.K.M. Anle}
\authornotemark[1]
\affiliation{%
  \institution{Distributed Computing Research Center}
  \city{Dublin}
  \state{Ohio}
  \country{USA}
}









\begin{abstract}
Federated reinforcement learning (FRL) enables distributed learning of optimal policies while preserving local data privacy through gradient sharing.
However, FRL faces the risk of data privacy leaks, where attackers exploit shared gradients to reconstruct local training data.
Compared to traditional supervised federated learning,  successful reconstruction in FRL requires the generated data not only to match the shared gradients but also to align with real transition dynamics of the environment (i.e., aligning with the real data transition distribution).
To address this issue, we propose a novel attack method called Regularization Gradient Inversion Attack (RGIA), which enforces prior-knowledge-based regularization on states, rewards, and transition dynamics during the optimization process to ensure that the reconstructed data remain close to the true transition distribution.
Theoretically, we prove that the prior-knowledge-based regularization term narrows the solution space from a broad set containing spurious solutions to a constrained subset that satisfies both gradient matching and true transition dynamics.
Extensive experiments on control tasks and autonomous driving tasks demonstrate that RGIA can effectively constrain reconstructed data transition distributions and thus successfully reconstruct local private data.
\end{abstract}


\ccsdesc[500]{Computing methodologies~Machine learning; Security and privacy}

\keywords{Data mining, Privacy leak, Data reconstruction, Gradient leakage, Adversarial attacks}


\maketitle

\section{Introduction}


In real-world scenarios, the training data of machine learning~\cite{DBLP:UppuluriPMKC25,DBLP:LiuYJLL24} is often scattered across various organizations, companies, and individual devices, which leads to the formation of isolated data silos.
For instance, hospitals possess distinct patient datasets, while banks maintain separate financial transaction records. 
Business competition, regulatory constraints, and privacy concerns further prevent the integration of training data.
This issue of data isolation severely hinders the applications of machine learning, particularly in reinforcement learning (RL) that requires collecting large amounts of data from diverse sources to learn effective policies.


To address the issue of data isolation in RL, recent work~\cite{DBLP:MartinBMORB25,DBLP:YouDLGWS25} extends federated learning to RL, i.e., Federated RL (FRL), which enables agents to collaborate on RL tasks while preserving data privacy.
Specifically, FRL includes a central agent and multiple local agents, where the local agents are trained with private data, and the central agent aggregates the gradients or losses from the local agents to update the model parameters.
This approach allows the central agent to learn a globally optimal policy without accessing local data, eliminating both the need for centralized data collection and the risk of data leakage.
Although having these advantages, FRL still faces significant security threats.
In particular, recent studies~\cite{DBLP:abs-2103-06473,DBLP:FangWG25} have shown that FRL is highly susceptible to adversarial attacks, where malicious perturbations are injected into the gradients of local agents to degrade the overall performance of the FRL system.
Furthermore, some studies~\cite{DBLP:abs-2303-02725} investigate environment poisoning attacks, which assume attackers can manipulate a subset of agents and poison the system by perturbing local observations during the policy training process.

In contrast to existing works, we study gradient inversion attacks in FRL, where attackers attempt to reconstruct the private training data of the agent from uploaded gradients, causing privacy leakage.
Unlike supervised learning gradient inversion attacks~\cite{DBLP:ZhuLH19,DBLP:ZhangXGYZ25}, gradient inversion in FRL faces a unique \textit{pseudo-solution problem}: reconstructed data may match the values of gradients while violating the true data transition distribution (i.e., violating the environmental dynamics).
Specifically, the gradients of the value function in FRL is composed of two components: (1) the temporal-difference (TD) error term, and (2) the value function gradients with respect to network parameters. 
Since the TD target depends jointly on reward $r$ and next state $s'$, and the value function gradients are determined by current state $s$, different $(s, a, r, s')$ tuples can yield identical gradient values, leading to ambiguity in reconstruction.
This ambiguity thus causes the generation of different reconstructed tuples from the same gradient during the data reconstruction process, among which some tuples may deviate from the true transition distribution, making a state $\tilde{s}'$ may be unreachable from $\tilde{s}$ under action $\tilde{a}$.
To address the above issue, we propose Regularization Gradient Inversion Attack (RGIA), which extends existing gradient inversion techniques by incorporating data distribution constraints to mitigate convergence toward pseudo-solutions.
Specifically, RGIA first samples candidate variables (state, action, reward, and next state) from a latent space (e.g., a normal distribution) to compute estimated gradients.
Then, RGIA iteratively optimizes the sampled variables to minimize the discrepancies between the estimated gradients and the real shared gradients, aiming to recover the original training data.
To prevent convergence to pseudo-solutions, RGIA integrates three prior-knowledge-based regularization terms into the optimization process: (1) \textbf{the state regularization term} constrains reconstructed states to remain close to the true state distribution via a Euclidean norm constraint, thereby avoiding the out-of-distribution problem; (2) \textbf{the reward range regularization term} constrains reconstructed rewards to lie within plausible bounds, excluding gradient-matching solutions that violate real reward limits; and (3) \textbf{the dynamics consistency regularization term} constrains reconstructed data to satisfy transition dynamics of the environment by minimizing the $L_2$ norm between predicted and true next states.
By embedding these semantic constraints, RGIA guides the optimization process toward a more feasible and trustworthy solution space. 
Our main contributions are as follows:
\begin{itemize}
    \item We conduct the first study on data reconstruction attacks in FRL, providing new insights into data security risks in the FRL setting.
    \item We propose RGIA, a novel attack method that addresses the pseudo-solution problem by incorporating prior-knowledge-based regularizations, and theoretically show that these constraints effectively reduce the feasible solution space.
    \item We validate RGIA through extensive experiments, demonstrating its capability in accurately reconstructing training data from shared gradients of local agents.
\end{itemize}

\section{Related Work}
\subsection{Gradient Inversion Attacks}
Federated Learning (FL)~\cite{DBLP:0004QX24,DBLP:NguyenB25} is a distributed machine learning paradigm that enables multiple participants to collaboratively train a model without sharing private data.
Although private data remains local, the shared gradients in FL may expose sensitive information.
Gradient inversion attacks~\cite{DBLP:LiangLZLZ23,DBLP:ChangZ24} exemplify this risk of privacy leakage, where attackers can hijack the gradients uploaded by a client and use this gradient information to inversely reconstruct the original data.
The existing research on gradient inversion attacks can be broadly categorized into the following classes: optimization-based attacks, analytics-based attacks, and generation-based attacks.

Optimization-based attacks~\cite{DBLP:ZhuLH19,DBLP:abs-2001-02610} formalize gradient matching as an optimization problem and reconstruct the training data by minimizing the difference between the true gradients and the fake gradients.
For example, Zhu et al.~\cite{DBLP:ZhuLH19} demonstrate near-perfect training data reconstruction in shallow networks via gradient optimization.
In contrast, analytics-based attacks~\cite{DBLP:HuangGSLA21,DBLP:KaissisZPRUTLMJ21} attempt to directly infer information about private data from gradients, rather than through complex iterative optimization.
Moreover, generation-based attacks~\cite{DBLP:FangCWWX23,DBLP:Shi00Z00L25} leverage generative models (e.g., GANs~\cite{DBLP:GoodfellowPMXWOCB14}) as prior knowledge to facilitate data reconstruction and generate more realistic and distribution-compliant training data.
Although these approaches can reconstruct data matching the shared gradients, they can not be readily applied in RL settings, where the resulting (state, action, reward, next state) tuples can violate the true environment dynamics. 
This inconsistency therefore causes the invalidity of the reconstructed data and ultimately the failure of attacks.

\subsection{Adversarial Attacks in FRL}
FRL~\cite{DBLP:MartinBMORB25,DBLP:YouDLGWS25} is an emerging learning paradigm that combines the privacy protection advantages of FL with the sequential decision-making capabilities of RL.
However, similar to conventional FL, FRL faces a series of security issues.
Specifically, Ma et al.~\cite{DBLP:abs-2303-02725} present a theoretical framework that formulates environmental poisoning in FRL as an optimization problem.
In this framework, attackers can manipulate the behaviour of agents by perturbing the observations of agents during local policy training and updating.
Similarly, Anwar et al.~\cite{DBLP:abs-2103-06473} propose an adversarial attack method called AdAMInG, which carefully tunes adversarial parameters to reduce the clean information gain during model aggregation.
This attack disrupts the optimization of the center agent and ultimately degrades overall system performance.
Unlike previous approaches, Fang et al.~\cite{DBLP:FangWG25} propose the normalized attack, which reduces the performance of FRL systems by maximizing the angle deviation of the aggregation strategy update before and after the attack.
In contrast to existing works, we explore gradient inversion attacks in FRL to reconstruct the private training data (i.e., state, action, reward, and next state) of the agent from uploaded local policy gradients. 
To alleviate the problem of pseudo-solutions in reconstructed data, we introduce prior-knowledge-based regularization terms that reduce the range of reconstruction data solution space by imposing a distribution constraint on the minimization of gradient loss.

\section{Preliminary}
\noindent\textbf{MDP and RL}.
The Markov Decision Process (MDP) is a widely used decision-making model for RL, formally represented as $\mathcal{M}=(\mathcal{S},\mathcal{A},\mathcal{R},\mathcal{P},\mu_0)$, where $\mathcal{S}$ is the state space, $\mathcal{A}$ is the action space, $\mathcal{R}:\mathcal{S}\times\mathcal{A}\to \mathbb{R}$ is the reward function, $\mathcal{P}:\mathcal{S}\times\mathcal{A}\to\Delta(\mathcal{S})$ is the transition model (with $\Delta(\mathcal{S})$ denoting state probability distributions), and $\mu_0$ is the initial state distribution.
Under this setup, at timestep $t$, the agent observes state $s_t \in \mathcal{S}$ and selects action $a_t \in \mathcal{A}$ according to a policy $\pi$, which maps states to action distributions: $\pi:\mathcal{S}\mapsto\Delta(\mathcal{A})$. 
The value function $V^{\pi}(s)$ quantifies the expected cumulative reward under $\pi$ from state $s$: $V^{\pi}(s)=\mathbb{E}\left[\sum_{t=0}^T\mathcal{R}(s_t,a_t)\mid \pi,s_0=s\right], \forall s \in \mathcal{S}$.
Based on the value function, the goal of RL is to find the optimal policy $\pi^*$ that maximizes the expected value from initial states: $\pi^*=\mathop{\text{argmax}}_{\pi} \mathbb{E}_{s_0\sim\mu_0}\left[V^{\pi}\left(s_0\right)\right]$.


\begin{figure}[t]
    \centering
    \includegraphics[width=0.73\linewidth]{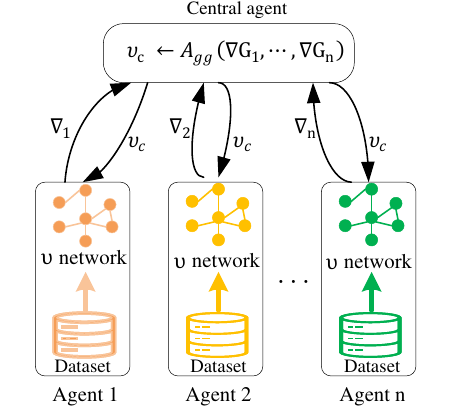}
    \caption{The FRL learning framework. $\upsilon$ network is the value function network.}
    \label{fig:FRL_struct}
\end{figure}

$\quad$
\\
\noindent \textbf{FRL}.
FRL enables multiple agents to collaboratively learn a globally optimal policy while preserving data privacy.
As illustrated in Figure~\ref{fig:FRL_struct}, each local agent $i$ computes the local value function network gradient \(\nabla_i\) and sends it to the central server.
The server aggregates these gradients through a function $A_{gg}$ (e.g., averaging):
$\nabla_{\text{c}}=A_{gg}(\nabla_1,\cdots,\nabla_n)$, where $n$ denotes the total number of agents.
Based on the aggregated gradients $\nabla_{\text{c}}$, the central value function network $\upsilon_c$ is updated via gradient ascent:
\begin{eqnarray}
    \upsilon_{\text{c}} \leftarrow \upsilon_{\text{c}} + \alpha \nabla_{\text{c}},
\end{eqnarray}
where \(\alpha\) is the learning rate.
Then, the updated network $\upsilon_c$ (or its parameters) is sent to all agents to continue local learning.
The loop is executed until either the model converges or a predefined number of iterations is reached.


$\quad$
\\
\textbf{Gradient inversion attacks}.
Gradient inversion attacks aim to reconstruct private input data by leveraging the gradients exposed from the target model.
Typically, attackers initialize reconstructed data ($\mathbf{x}_{\text{rec}}$) and labels ($\mathbf{y}_{\text{rec}}$) randomly, and then iteratively optimizes both to minimize the difference between the fake gradients ($\nabla_{\theta} \mathcal{L}(\mathbf{x}_{\text{rec}}, \mathbf{y}_{\text{rec}})$) and the true gradients ($\nabla_{\theta} \mathcal{L}(\mathbf{x}, \mathbf{y})$), computed from original training data.
Formally, this is formalized by minimizing the Euclidean distance between the two gradients:
\begin{eqnarray}
    \min_{\mathbf{x}_{\text{rec}}, \mathbf{y}_{\text{rec}}} \|\nabla _\theta\mathcal{L}\left(\mathbf{x}_{\text{rec}}, \mathbf{y}_{\text{rec}}\right)-\nabla_\theta \mathcal{L}(x,y)\|^2+\lambda \|\mathbf{x}_{\text{rec}}\|^2,
\end{eqnarray}
where $\lambda$ is a regularization parameter used to prevent overfitting.

$\quad$
\\
\noindent\textbf{Attack knowledge}.
In FRL, attackers lack access to local agent training data but can access the central server to obtain uploaded gradients and possess a priori knowledge of part of the training data.
This assumption is common in prior work~\cite{DBLP:0004QX24,DBLP:NguyenB25} and highly realistic.
For example, in FRL applied to clinical decision-making, attackers may not have direct access to the private medical records of the hospital. 
However, by utilizing publicly available datasets or partial case information from past patients, attackers can infer an approximate distribution of the medical data.
In our setting, we limit the proportion of such prior data to only 0.3$\%$ of the overall training set.

$\quad$
\\
\textbf{Attack ability}.
During the model update phase in FRL, attackers can hijack one or more uploaded gradients to reconstruct training data.
Specifically, attackers employ a shadow model that is identical to the model of the central server (i.e., the value function). 
This shadow model is used to generate fake gradients $\nabla \mathcal{L}_{fake}$, which are computed based on randomly sampled state, action, and reward variables.
Attackers then reconstruct the training data by iteratively optimizing these sampled variables to minimize the discrepancy between the fake gradients and the real uploaded gradients:
\begin{equation}
    \begin{split}
        \mathbf{s}'', \mathbf{a}'',\mathbf{r}''&=\underset{\mathbf{s}, \mathbf{a},\mathbf{r}}{\arg \min }\left\|\nabla \mathcal{L}_{fake}-\nabla \mathcal{L}_{real} \right\|^{2}\\
        &=\underset{\mathbf{s}, \mathbf{a},\mathbf{r}}{\arg \min }\left\|\frac{\partial \ell\left(F\left(\mathbf{s}',\mathbf{a}', w\right), \mathbf{r}'\right)}{\partial w}-\nabla \mathcal{L}_{real}\right\|^{2},
    \end{split}
\end{equation}
where $w$ is the parameter of the model.

\section{method}
In this section, we describe how RGIA reconstructs the training states, actions, and rewards from shared gradients, as illustrated in Algorithm~\ref{alg:rgia}.
Attackers first hijack the gradient ($\nabla_\theta \mathcal{L}_{real}$)  from the local agent to the central agent.
Next, fake samples are generated from the normal distribution (Line 2), and the corresponding fake gradients are computed (Line 4).
Subsequently, the prior-knowledge-based regularization is incorporated into the loss function for both the real gradients and the fake gradients, and the fake examples are iteratively updated via gradient descent (Line 5 $\sim$ 13).
Finally, by minimizing this loss function, attackers reconstruct the state, action, reward and next state.

\begin{algorithm}[t]
\caption{RGIA}
\label{alg:rgia}
\begin{algorithmic}[1]
\Require real sample gradient $\nabla_\theta \mathcal{L}_{real}$, learning rate $\eta$.

\State \textbf{Initialize:}
\State \quad Initialize fake training data samples $\tilde{s}, \tilde{a}, \tilde{r}, \tilde{s}'$ from $\mathcal{N}(0, 1)$.

\For{$i = 1$ to max\_iterations}
    \State Compute the fake gradient $\nabla_\theta \mathcal{L}_{fake}$ with respect to the Q-network parameters.
    \State \textbf{Define the combined loss function $\mathcal{L}$:}
    \State \quad $\mathcal{R}_{\text{total}}(\tilde{s}, \tilde{a}, \tilde{r}, \tilde{s}') = \alpha \mathcal{R}_s + \beta \mathcal{R}_r + \gamma \mathcal{R}_{f}$,
    \State \quad $\mathcal{L} = \min_{\tilde{s}, \tilde{a}, \tilde{r}, \tilde{s}'} \lVert \nabla_\theta L_{fake} - \nabla_\theta L_{real} \rVert^2 + \lambda \mathcal{R}_{\text{total}}(\tilde{s}, \tilde{a}, \tilde{r}, \tilde{s}')$.
    \State \textbf{Update the fake training data samples using gradient descent:}
    \State \quad $\tilde{s}_{i+1} = \tilde{s}_i - \eta \nabla_{\tilde{s}} \mathcal{L}$,
    \State \quad $\tilde{a}_{i+1} = \tilde{a}_i - \eta \nabla_{\tilde{a}} \mathcal{L}$,
    \State \quad $\tilde{r}_{i+1} = \tilde{r}_i - \eta \nabla_{\tilde{r}} \mathcal{L}$,
    \State \quad $\tilde{s}'_{i+1} = \tilde{s}'_i - \eta \nabla_{\tilde{s}'} \mathcal{L}$.
\EndFor

\State \textbf{Return:} $(\tilde{s}, \tilde{a}, \tilde{r}, \tilde{s}')$.

\end{algorithmic}
\end{algorithm}

\subsection{Gradient Derivation Training Data}
The goal of RGIA is to reconstruct the FRL training data samples $(s, a, r, s')$  from the gradients $\nabla_\theta \mathcal{L}$ obtained during training.
We first recall the standard value function loss:
\begin{equation}
    \mathcal{L}= \frac{1}{2}(Q_\theta(s, a) - y)^2.
\end{equation}
Here, $Q_\theta(s, a)$ is the predicted value of the current network for the current state and action, and $y$ is the target Q value, which can be expressed as $y= r + \gamma \max_{a'} Q_{\theta^-}(s', a')$, where $Q_{\theta^-}$ is the target Q-network.
To recover $(s, a, r, s')$, RGIA initializes learnable variables $(\tilde{s}, \tilde{a}, \tilde{r}, \tilde{s}') \sim \mathcal{N}(0, 1)$ and constructs a fake loss $\mathcal{L}_{\text{fake}}$ by:
\begin{equation}
    \tilde{y} = \tilde{r} + \gamma \max_{a'} Q_{\theta^-}(\tilde{s}', a'),
\end{equation}
\begin{equation}
    \mathcal{L}_{\text{fake}} = \frac{1}{2}(Q_\theta(\tilde{s}, \tilde{a}) - \tilde{y})^2.
\end{equation}

RGIA computes the fake gradients by taking the derivative of the fake loss with respect to the Q-network parameters, and then minimizes the discrepancy between the fake gradients and the real gradients as follows:
\begin{equation}
\label{eq:min_gradient}
    \mathcal{L}=\min_{\tilde{s}, \tilde{a}, \tilde{r}, \tilde{s}'} \left\| \nabla_\theta \mathcal{L}_{\text{fake}} - \nabla_\theta \mathcal{L}_{\text{real}} \right\|^2.
\end{equation}
To optimize Eq.~(\ref{eq:min_gradient}), RGIA iteratively updates the fake data so that the fake gradients match the real ones closely.
This optimization proceeds via gradient descent as follows:
\begin{equation}
\begin{cases} 
\tilde{s}_{i+1}=\tilde{s}_i-\eta \nabla_{\tilde{s}}  \mathcal{L} &  \\
\tilde{a}_{i+1}=\tilde{a}_i-\eta \nabla_{\tilde{a}}  \mathcal{L} &\\
\tilde{r}_{i+1}=\tilde{r}_i-\eta \nabla_{\tilde{r}}  \mathcal{L} & \\
\tilde{s'}_{i+1}=\tilde{s'}_i-\eta \nabla_{\tilde{s'}}  \mathcal{L}, 
\end{cases}
\end{equation}
where $\eta$ is the learning step size and $i$ the number of iterations.

\subsection{Prior-knowledge-based Regularization Terms}
Although gradient derivation can minimize the difference between the fake and real gradients to reconstruct training samples, it faces a pseudo-solution issue.
Specifically, the same gradient can reconstruct different  $(s, a, r, s') $ tuples, which may not conform to the true data transition distribution.
To address this issue, we introduce the prior-knowledge-based regularization term into the gradient matching optimization process, in order to strengthen the distributional constraints on the reconstructed data and guide the optimization towards more trustworthy reconstruction results.
Specifically, we construct three distinct regularization terms, which correspond to states, rewards, and transition dynamics terms, respectively.

The first term is a state regularization, which forces the reconstructed states to be close to the true distribution $\mathcal{S}$: $\mathcal{R}_s = \|\tilde{s} - \mu_s\|^2$, where $\mu_s$ denotes the empirical state distribution. 
By enforcing adherence of reconstructed states to the true distribution, this term can prevent the generation of anomalous states with excessively large distribution deviations.
The second term is a reward range regularization that acts as a soft penalty on reconstructed rewards, defined as $\mathcal{R}_r = \left(\text{ReLU}(\tilde{r} - r_{\max})\right)^2 + \left(\text{ReLU}(r_{\min} - \tilde{r})\right)^2$.
This term ensures that the reconstructed reward value is within a reasonable range and prevents the generation of reward values that can match the true gradients but violate range values.
The last term is a dynamics consistency regularization, which ensures that the $(s, a, s')$ triples are feasible under environmental dynamics.
To achieve this, a state transition model $f:\mathcal{S} \times \mathcal{A} \rightarrow \mathcal{S}$ is pre-trained and incorporated into the optimization via the loss  $\mathcal{R}_{f} = \|f(\tilde{s}, \tilde{a}) - \tilde{s}'\|^2$. 
This regularization encourages the reconstructed next state $\tilde{s}'$ to align with the predicted transition from $\tilde{s}$ and $\tilde{a}$, ensuring that the generated trajectory segment exists in the real environment.

Finally, we define the combined regularization term as \\
$\mathcal{R}_{\text{total}}(\tilde{s}, \tilde{a}, \tilde{r}, \tilde{s}')  = \alpha \mathcal{R}_s + \beta \mathcal{R}_r + \gamma \mathcal{R}_{f}$, 
where $\alpha$, $\beta$, and $\gamma$ are the weighting coefficients for the state, reward, and dynamics regularization terms, respectively. 
Based on this definition, we rewrite Eq.~(\ref{eq:min_gradient}) into the following form:
\begin{equation}
\label{eq:tot_contranst}
    \mathcal{L}=\min_{\tilde{s}, \tilde{a}, \tilde{r}, \tilde{s}'} \left\| \nabla_\theta \mathcal{L}_{\text{fake}} - \nabla_\theta \mathcal{L}_{\text{real}} \right\|^2+ \lambda \mathcal{R}_{\text{total}}(\tilde{s}, \tilde{a}, \tilde{r}, \tilde{s}'),
\end{equation}
where $\lambda > 0$ is a balancing hyperparameter that controls the weight of $\mathcal{R}_{\text{total}}$ in the overall objective.
By introducing these environment-related prior constraints, we can effectively restrict the solution space to more plausible reconstructions, which in turn makes the gradient inversion process more tractable and aligned with the underlying environment dynamics.

\subsection{Regularization Analysis}
To validate the effectiveness of the proposed method, this section conducts a theoretical analysis to demonstrate that RGIA achieves the desired performance.
We first formally define the relevant symbols and problems.
\begin{definition}[Sample and parameter space]
     Let $\mathcal{S} \subseteq \mathbb{R}^{d_s}$ denote the state space, $\mathcal{A}$ the action space, and $\mathcal{R} \subseteq \mathbb{R}^{d_r}$ the reward space.
     A sample to be reconstructed is represented by the tuple $x = (s, a, r, s') \in \mathcal{S} \times \mathcal{A} \times \mathcal{R} \times \mathcal{S}$, which consists of a state, action, reward, and next state.  Additionally,  $\theta \in \Theta \subseteq \mathbb{R}^{d_\theta}$ represents the parameters of the neural network model.
\end{definition}

\begin{definition}[Gradient inverse problem]
    Given an observed gradient \(g_{\text{obs}} = \nabla_\theta L(x_{\text{real}}; \theta)\) computed from real data \(x_{\text{real}}\), the goal of gradient inversion reconstruction is to find a reconstructed sample \(\tilde{x} = (\tilde{s}, \tilde{a}, \tilde{r}, \tilde{s}')\) that minimizes the following objective function:
$F(\tilde{x}) = \| \nabla_\theta L(\tilde{x}; \theta) - g_{\text{obs}} \|^2 \quad (P1)$.
\end{definition}

\begin{definition}[Original solution space]
\label{def:ossp}
    The original solution space $X_{orig}$ for problem (P1) is defined as the set of all samples $\tilde{x}$ satisfying $X_{orig} = \{ \tilde{x} \mid \| \nabla_{\theta} L(\tilde{x}; \theta) - g_{obs} \|^2 = 0 \}$.
\end{definition}
Subsequently, we analyze the constraining effect of each regularization term on the solution space through the following remarks and lemma.

\begin{remark}[State space constraints]
\label{rem:remark1}
    The regularization term \( R_s = | \tilde{s} - \mu_s |^2 \) encourages the optimization process to find a state \(\tilde{s}\) near \(\mu_s\). Intuitively, the term penalizes large deviations from \(\mu_s\), so optimal solutions \(\tilde{s}^*\) tend to lie within a neighborhood of \(\mu_s\). The size of this neighborhood depends on the trade-off between the primary objective (e.g., gradient matching error) and the regularization weight \(\alpha\): a larger \(\alpha\) shrinks the neighborhood, while a larger objective value may allow a larger deviation.
\end{remark}

    



\begin{lemma}[Reward space constraints]
\label{lem:lemma2}
    Given the optimization problem:
    $\min_{\tilde{x}} J(\tilde{x}) = F(\tilde{x}) + \lambda \mathcal{R}_r(\tilde{r}), \quad \lambda > 0$,
    where $F(\tilde{x})$ is the primary objective function and $\mathcal{R}_r$ is the regularization term defined as:
    $\mathcal{R}_r(\tilde{r}) = \left(\text{ReLU}(\tilde{r} - r_{\max})\right)^2 + \left(\text{ReLU}(r_{\min} - \tilde{r})\right)^2$, then for any optimal solution $\tilde{x}^*$, the reward component $\tilde{r}^*$ must satisfy $\tilde{r}^* \in [r_{\min}, r_{\max}]$. 
\end{lemma}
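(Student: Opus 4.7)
The plan is to prove Lemma~\ref{lem:lemma2} by a short optimality-comparison argument: exhibit a feasible point at which the combined objective is zero, and then conclude that any global minimizer must also drive the regularizer to zero. I would make it explicit at the outset that ``optimal solution'' means a global minimizer of $J$ and that the true data tuple $x_{\mathrm{real}}$ lies in the admissible search domain.

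First, I would verify the structural property of $\mathcal{R}_r$ itself. Because $\mathrm{ReLU}(z)$ is nonnegative and equals zero iff $z\le 0$, the term $(\mathrm{ReLU}(\tilde r-r_{\max}))^2$ vanishes iff $\tilde r\le r_{\max}$, and symmetrically $(\mathrm{ReLU}(r_{\min}-\tilde r))^2$ vanishes iff $\tilde r\ge r_{\min}$. Hence $\mathcal{R}_r(\tilde r)\ge 0$ with equality if and only if $\tilde r\in[r_{\min},r_{\max}]$. Combined with $F(\tilde x)\ge 0$ and $\lambda>0$, this makes $J=F+\lambda\mathcal{R}_r$ a sum of two nonnegative terms, so $J(\tilde x)\ge 0$ for every $\tilde x$ and $J(\tilde x)=0$ forces both summands to vanish at $\tilde x$.

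Second, I would supply a zero-objective witness. The true training tuple $x_{\mathrm{real}}=(s,a,r,s')$ that generated the observed gradient satisfies $F(x_{\mathrm{real}})=\|\nabla_\theta L(x_{\mathrm{real}};\theta)-g_{\mathrm{obs}}\|^2=0$ by the very definition of $g_{\mathrm{obs}}$ in the gradient inverse problem, and since the true reward $r$ is drawn from the reward space with bounds $r_{\min},r_{\max}$, the first step yields $\mathcal{R}_r(r)=0$. Therefore $J(x_{\mathrm{real}})=0$. Third, global optimality of $\tilde x^*$ gives $0\le J(\tilde x^*)\le J(x_{\mathrm{real}})=0$, so $J(\tilde x^*)=0$; by the nonnegativity splitting above, $\mathcal{R}_r(\tilde r^*)=0$, and by the characterization of its zero set, $\tilde r^*\in[r_{\min},r_{\max}]$, which is the claim.

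I do not expect any serious technical obstacle, since no calculus on $F$ is needed and $\mathcal{R}_r$ is an explicit piecewise-quadratic with a known zero set. The only real subtlety is conceptual: the statement silently depends on ``optimal'' meaning a global minimizer and on $x_{\mathrm{real}}$ being admissible, so I would state both assumptions explicitly. If ``optimal'' were weakened to local minimizer, the comparison to $x_{\mathrm{real}}$ would fail, and one would instead need a first-order (or subdifferential) analysis of $\mathcal{R}_r$ near the boundary points $r_{\min},r_{\max}$ to rule out stationary points outside $[r_{\min},r_{\max}]$; I would flag this as a possible extension rather than attempt it here.
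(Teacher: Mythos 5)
Your proof is correct under the assumptions you state, but it takes a genuinely different route from the paper's. The paper argues by contradiction with a local perturbation: assuming $\tilde r^*>r_{\max}$, it moves the reward to the midpoint $(\tilde r^*+r_{\max})/2$, computes that the penalty drops by $\tfrac34\lambda(\tilde r^*-r_{\max})^2$, and invokes continuity of $F$ in $\tilde r$ to claim the change in $F$ can be made smaller than this drop, contradicting optimality (with a symmetric Case~2 for $\tilde r^*<r_{\min}$). You instead give a global comparison argument: since $F=\|\nabla_\theta L(\tilde x;\theta)-g_{\mathrm{obs}}\|^2\ge 0$ and the true tuple $x_{\mathrm{real}}$ achieves $F(x_{\mathrm{real}})=0$ and $\mathcal R_r(r_{\mathrm{real}})=0$, any global minimizer satisfies $J(\tilde x^*)\le J(x_{\mathrm{real}})=0$, forcing $\mathcal R_r(\tilde r^*)=0$ and hence $\tilde r^*\in[r_{\min},r_{\max}]$. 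What each buys: the paper's version is nominally more general (it never uses nonnegativity of $F$ or the existence of a zero-objective witness), but its continuity step is actually shaky --- continuity alone controls $|\Delta F|$ only for small perturbations, and for small perturbations the penalty decrease is only linear in the step size, so mere continuity does not guarantee a net decrease; indeed, for a continuous $F$ with no feasible zero (e.g.\ $F=(\tilde r-c)^2$ with $c>r_{\max}$) the claim fails outright. Your argument sidesteps all calculus and is airtight, at the price of needing $F\ge 0$ and admissibility of $x_{\mathrm{real}}$ --- both of which hold in the paper's setting and are exactly the facts the paper itself exploits later in the proof of Theorem~\ref{theorem:one}. You are also right to flag that ``optimal'' must mean global minimizer; that caveat applies equally to both proofs.
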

Proof can be found in Appendix~\ref{supsec:lemma_proof}. Lemma~\ref{lem:lemma2} indicates that the regularization term $\mathcal{R}_r = \left(\text{ReLU}(\tilde{r} - r_{\max})\right)^2 + \left(\text{ReLU}(r_{\min} - \tilde{r})\right)^2$ imposes a constraint on the solution space: the reward component $\tilde{r}^*$ of any optimal solution must lie within $[r_{\min}, r_{\max}]$. Solutions violating this interval incur a significant penalty.



\begin{remark}
The regularization term $\mathcal{R}_{\text{f}} = \| f(\tilde{s}, \tilde{a}) - \tilde{s}' \|^2$ serves to enforce dynamical consistency. By minimizing this term, the solution $x = (\tilde{s}, \tilde{a}, \tilde{s}')$ is constrained to the learned dynamical manifold $M_f$, ensuring that the predicted next state $\tilde{s}'$ is consistent with the state-action pair as defined by the transition model $f$.
\end{remark}

Based on the above remarks and lemma, we can obtain the following theorem.
\begin{theorem}[Regularization for compressing the solution space]
\label{theorem:one}
    Let $\mathcal{X}_{\text{reg}}^*$ denote the set of optimal solutions to the regularized problem.
    By introducing the regularization term $\mathcal{R}_{\text{total}}(x)$, the solution space is effectively compressed $\mathcal{X}^*_\text{reg} \subseteq \mathcal{X}_\text{orig} \cap \mathcal{C}_\text{prior}$, where $\mathcal{C}_{\text{prior}}$ denotes the constraint set defined by the prior-informed regularization.
\end{theorem}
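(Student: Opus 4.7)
The plan is to establish the containment $\mathcal{X}^*_{\text{reg}} \subseteq \mathcal{X}_{\text{orig}} \cap \mathcal{C}_{\text{prior}}$ by a two-step argument: first identify $\mathcal{C}_{\text{prior}}$ with the zero-level set of $\mathcal{R}_{\text{total}}$, then exploit non-negativity of each summand in the regularized objective $J(\tilde{x}) = F(\tilde{x}) + \lambda \mathcal{R}_{\text{total}}(\tilde{x})$ to force every minimizer into the zero-sets of both $F$ and $\mathcal{R}_{\text{total}}$ simultaneously. The ground-truth sample $x_{\text{real}}$ will serve as the witness that the infimum of $J$ equals zero.

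First I would set up the ingredients. Each regularizer is a sum of squared quantities, so $\mathcal{R}_s, \mathcal{R}_r, \mathcal{R}_f \geq 0$, and their zero-level sets are, respectively, a neighborhood of $\mu_s$ (under the soft-penalty reading of Remark~\ref{rem:remark1}), the interval $[r_{\min}, r_{\max}]$ (by Lemma~\ref{lem:lemma2}), and the dynamical manifold $M_f = \{(\tilde{s}, \tilde{a}, \tilde{s}') : \tilde{s}' = f(\tilde{s}, \tilde{a})\}$ (by the dynamics remark). With $\alpha, \beta, \gamma > 0$, the weighted sum $\mathcal{R}_{\text{total}}$ vanishes precisely on the intersection of the three, which I would take as the definition of $\mathcal{C}_{\text{prior}}$. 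By Definition~\ref{def:ossp}, $F(\tilde{x}) \geq 0$ with equality exactly on $\mathcal{X}_{\text{orig}}$. Evaluating at $x_{\text{real}}$ gives $F(x_{\text{real}}) = 0$ trivially; and under the realistic assumption that $x_{\text{real}}$ respects the environment's prior, $\mathcal{R}_{\text{total}}(x_{\text{real}}) = 0$, so $J(x_{\text{real}}) = 0$ and the global infimum is zero. Any $\tilde{x}^* \in \mathcal{X}^*_{\text{reg}}$ attains this infimum, and since both $F(\tilde{x}^*)$ and $\lambda \mathcal{R}_{\text{total}}(\tilde{x}^*)$ are non-negative, each must vanish individually. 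Vanishing of $F$ yields $\tilde{x}^* \in \mathcal{X}_{\text{orig}}$, and vanishing of $\mathcal{R}_{\text{total}}$ (together with $\alpha, \beta, \gamma > 0$ splitting $\mathcal{R}_s = \mathcal{R}_r = \mathcal{R}_f = 0$) yields $\tilde{x}^* \in \mathcal{C}_{\text{prior}}$.

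The main obstacle is the state regularizer $\mathcal{R}_s = \|\tilde{s} - \mu_s\|^2$, which acts as a soft penalty rather than a hard constraint: demanding $\mathcal{R}_s(x_{\text{real}}) = 0$ exactly would force $s_{\text{real}} = \mu_s$, which is generally not true. I would resolve this by defining $\mathcal{C}_{\text{prior}}$ as the sub-level set $\{\mathcal{R}_s \leq \varepsilon_s\} \cap \{\tilde{r} \in [r_{\min}, r_{\max}]\} \cap M_f$, where the tolerance $\varepsilon_s$ is calibrated to the weight $\alpha$ through the trade-off sketched in Remark~\ref{rem:remark1} (larger $\alpha$ contracts the admissible neighborhood around $\mu_s$). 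Under this reading the argument above still goes through: the reward and dynamics coordinates vanish exactly by Lemma~\ref{lem:lemma2} and the dynamics remark, while the state coordinate is confined to the admissible neighborhood by the optimality of $\tilde{x}^*$, completing the claimed inclusion.
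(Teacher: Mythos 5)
Your central step --- that the global infimum of $J(\tilde{x}) = F(\tilde{x}) + \lambda \mathcal{R}_{\text{total}}(\tilde{x})$ equals zero, so that non-negativity forces $F$ and every regularizer to vanish individually at any minimizer --- does not hold, and the paper's proof does not claim it. The witness $x_{\text{real}}$ gives $F(x_{\text{real}}) = 0$, but $\mathcal{R}_s(s_{\text{real}}) = \|s_{\text{real}} - \mu_s\|^2 = \delta_s$ and $\mathcal{R}_f(x_{\text{real}}) = \|f(s_{\text{real}}, a_{\text{real}}) - s'_{\text{real}}\|^2 = \delta_{\text{dyn}}$ are both strictly positive in general (the first because the true state is not the empirical mean, the second because $f$ is a \emph{learned, approximate} transition model). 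You correctly flag the first obstruction but miss the second, and your patch --- relaxing only the state coordinate of $\mathcal{C}_{\text{prior}}$ to a sub-level set --- does not repair the argument. Once $J(x_{\text{real}}) = \lambda(\alpha\delta_s + \gamma\delta_{\text{dyn}}) \triangleq \epsilon_J > 0$, all you can conclude about a minimizer $\tilde{x}^*$ is $J(\tilde{x}^*) \leq \epsilon_J$, hence each non-negative summand is bounded by $\epsilon_J$ (over its weight), not zero. Your claims that "the reward and dynamics coordinates vanish exactly" and that $F(\tilde{x}^*) = 0$ (so $\tilde{x}^* \in \mathcal{X}_{\text{orig}}$ exactly) are therefore unsupported: the minimizer can trade a small positive reward penalty, dynamics error, or gradient-matching error against a reduction in the state term. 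Invoking Lemma~\ref{lem:lemma2} for exact reward containment also does not apply cleanly here, since that lemma concerns the problem $F + \lambda\mathcal{R}_r$ in isolation, and the paper's own theorem proof deliberately settles for $\tilde{r}^* \in [r_{\min} - \epsilon,\, r_{\max} + \epsilon]$.

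The paper's actual argument is a comparison (upper-bound) argument rather than an exact-annihilation argument: from $J(\tilde{x}^*) \leq J(x_{\text{real}}) = \epsilon_J$ it extracts simultaneously $\|\nabla_\theta \mathcal{L}(\tilde{x}^*;\theta) - g_{\text{obs}}\|^2 \leq \epsilon_J$ (so $\tilde{x}^* \in \mathcal{X}^{\epsilon}_{\text{orig}}$, the $\epsilon$-\emph{approximate} gradient-matching set --- note the main-text statement writes $\mathcal{X}_{\text{orig}}$ but the proof and the discussion following the theorem use $\mathcal{X}^{\epsilon}_{\text{orig}}$) and $\mathcal{R}_{\text{total}}(\tilde{x}^*) \leq \epsilon_J/\lambda$, which splits into a hypersphere around $\mu_s$, an $\epsilon$-enlarged reward interval, and an $\epsilon$-neighborhood of the manifold $M_f$. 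To fix your proof you would need to apply the same relaxation uniformly to \emph{all four} constraint sets, not just the state one, at which point you recover the paper's argument.
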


Detailed proof is provided in Appendix~\ref{supsec:theorem_proof}.
Theorem~\ref{theorem:one} guarantees that the optimal solution $\tilde{x}^*$ lies in the intersection of the two constraint sets, i.e., $\tilde{x}^* \in \mathcal{X}_{\text{orig}}^\epsilon \cap \mathcal{C}_{\text{prior}}$.
This intersection reflects a well-behaved subset of the original search space, where solutions satisfy both gradient-matching requirements and physical plausibility constraints.
Consequently, by exploiting this subset, RGIA effectively narrows the solution space, filtering out spurious solutions that violate these constraints and focusing on physically plausible candidates.

\section{Experiments}
This section presents the experimental setup and evaluates the attack performance of RGIA.

\subsection{Experiment Setup}
\noindent\textbf{Experiment environments}.
To comprehensively evaluate the data reconstruction capabilities of RGIA, we conduct experiments across a diverse set of environments that vary in both action space type and state dimensionality. Specifically, we include: a high-dimensional (i.e., image input) discrete action environment (Atari), a high-dimensional continuous action environment (Car Racing), a low-dimensional (i.e., One-dimensional vector input) continuous action environment (MuJoCo), and a low-dimensional discrete action environment (Frozen Lake). 
Detailed descriptions of these environments are provided in Appendix~\ref{supsec:envion_infor}.


$\quad$
\\
\noindent\textbf{RL algorithms}.
To effectively evaluate the attack performance of RGIA on different FRL algorithms, we select two representative methods: the online algorithm Actor-Critic (AC)~\cite{DBLP:MnihBMGLHSK16} and the offline algorithm BCQ~\cite{DBLP:FujimotoMP19}. 
Specifically, we construct an FRL framework consisting of three local agents and one central agent.
In this setup, each local agent uses the AC or BCQ algorithm to learning a policy. 
During training, the local agents compute and upload value function gradients to the central agent, which aggregates these gradients to update a shared central value function model. 
Finally, the updated model parameters are broadcast back to all local agents to initiate the next round of learning.


$\quad$
\\
\noindent\textbf{Evaluation metrics}.
To evaluate gradient-based RL inversion attacks, we use distinct metrics based on the input type.
For image state inputs, we assess the similarity between reconstructed and original images using PSNR~\cite{kunt1985second} and SSIM~\cite{wang2004image}.
For low-dimensional state inputs, we compute the mean squared error (MSE) to quantify the difference between the original and reconstructed state features.
For actions, we use recovery accuracy (RA) to measure the reconstruction capability of RGIA, defined as $\frac{n}{m} \times 100\%$, where $n$ is the number of correctly reconstructed actions and $m$ is the total number of evaluated actions.
Since rewards are continuous variables, we also use MSE to evaluate the accuracy of reconstructed rewards.
Moreover, to assess the influence of regularization terms on the reconstructed samples, we introduce consistency metrics, including: Euclidean Distance (ED), Silhouette Score (SS), Covariance Determinant (CD), and Transition Error (TE).
Detailed descriptions of all metrics are provided in Appendix~\ref{supsec:metirc}.

$\quad$
\\
\noindent\textbf{Baselines}.
Existing adversarial attacks in FRL systems primarily focus on reward poisoning and state perturbation. 
In contrast, the goal of RGIA is to reconstruct the training data of each local policy based on the shared gradients in FRL.
To evaluate RGIA, we adapt four supervised learning gradient inversion attacks to the FRL setting: DGL~\cite{DBLP:ZhuLH19}, GradInv~\cite{DBLP:YinMVAKM21}, GIFD~\cite{DBLP:FangCWWX23}, and DFLeak~\cite{DBLP:ZhangXGYZ25}.
This comparison not only highlights the superior performance of RGIA but also reveals the limitations of these adapted attacks in the FRL setting.

\begin{figure*}
    \centering
    \includegraphics[width=0.90\linewidth]{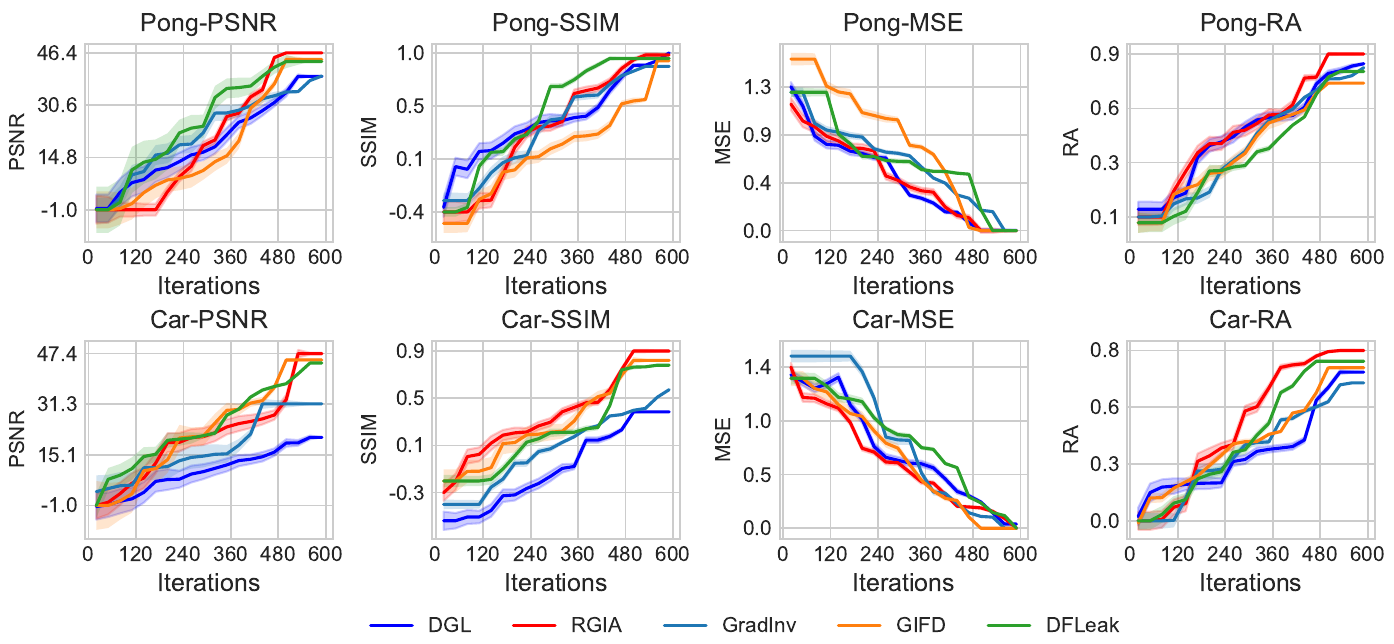}
    \caption{The reconstruction process of training data. The x-axis represents the number of iterations, and the y-axis corresponds to various evaluation metrics.}
    \label{fig:lines}
\end{figure*}

\subsection{Baseline Comparison Results}

\noindent \textbf{High-dimensional environments}.
Figure~\ref{fig:lines} presents a comparison between RGIA and baseline methods (DGL, GrandInv, GIFD, and DFLeak) in online settings using the AC algorithm,
The results show that RGIA consistently outperforms the baselines across these reconstruction metrics.
Specifically, in the Car Racing environment, RGIA achieves better PSNR and SSIM scores, which indicates that the states reconstructed by RGIA are closer to the original inputs.
Similarly, RGIA outperforms other methods in reconstructing actions.
Although reward reconstruction MSE values are comparable across all methods, RGIA maintains slightly lower reward errors.
These results demonstrate that RGIA can effectively constrain the solution space through the regularization of prior knowledge, which leads to more precise reconstruction on the training data.

\begin{figure*}
    \centering
    \includegraphics[width=0.88\linewidth]{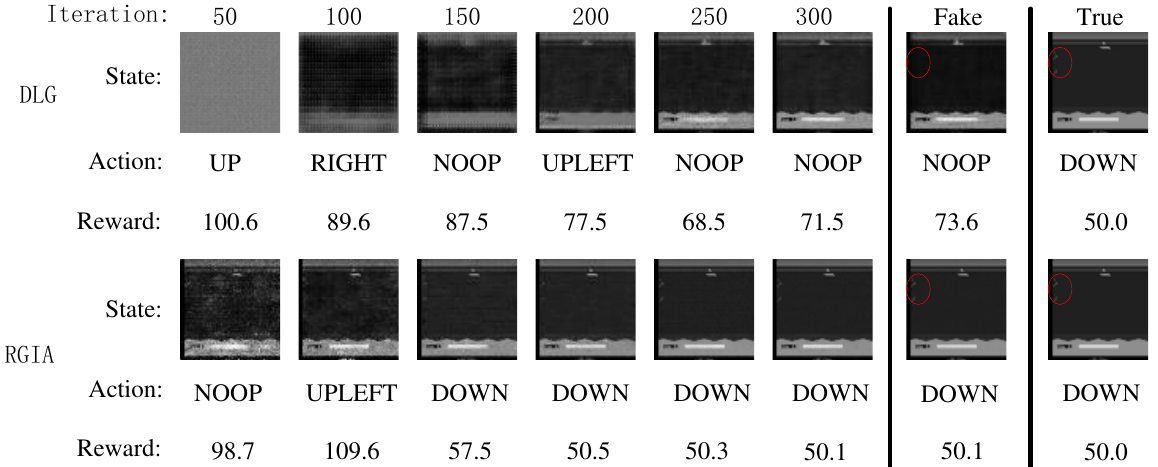}
    \caption{Visualization of the reconstruction process. Fake denotes the data reconstructed by gradient inversion attacks, while True refers to the ground-truth data.}
    \label{fig:reconsted_process}
\end{figure*}

In addition, Figure~\ref{fig:reconsted_process} visualizes the reconstruction process of states, actions, and rewards from gradients of the value function.
The results reveal significant differences between the states generated by DLG and the ground-truth states in the Sequest environment.
As shown in the red ellipse, the ground-truth state includes divers but the DLG reconstructed state lacks divers.
This discrepancy arises because DLG cannot resolve the pseudo-solution problem in FRL, where multiple $(s,a,r,s')$ tuples correspond to the same gradient.
Unlike DLG, RGIA can effectively reconstruct training states by using prior knowledge-based regularization.
Further experimental results and analyses are presented in Appendix~\ref{supsec:high-dim}.

\begin{figure}[]
    \centering
    \begin{subfigure}[b]{0.38\textwidth}
        \centering
        \includegraphics[width=\textwidth]{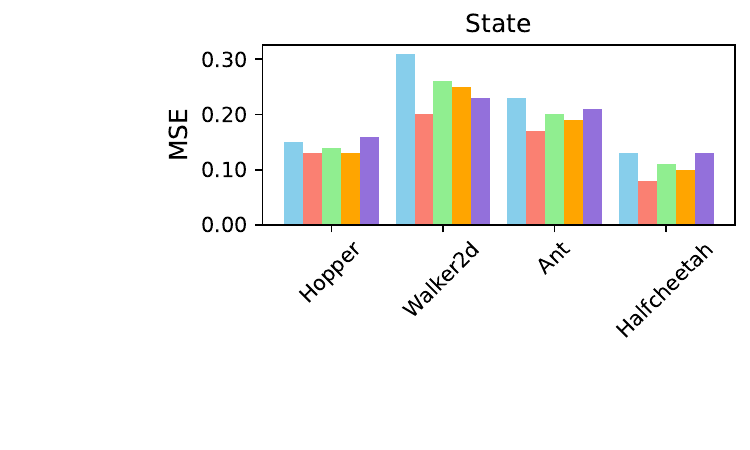}
    \end{subfigure}
    \hfill
    \begin{subfigure}[b]{0.38\textwidth}
        \centering
        \includegraphics[width=\textwidth]{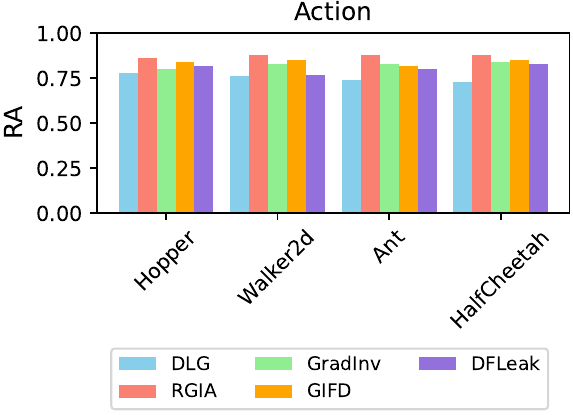}
    \end{subfigure}
    \caption{The comparison results of low-dimensional environments. To ensure consistent visualization of coordinates, the state MSE is scaled up by $10^{-6}$ in the Hopper and Halfcheetah environments, and by $10^{-7}$ in Walker2d and Ant.}
    \label{fig:conti_metric}
\end{figure}

$\quad$
\\
\noindent \textbf{Low-dimensional environments}.
Similarly, in the offline settings with low-dimensional state space, we evaluate RGIA using BCQ, while keeping all other settings unchanged.
Specifically, BCQ performs offline learning using data from MuJoCo environments (Hopper, Walker2D, Ant, and Halfcheetah) and the FrozenLake environment. 
For each environment, the dataset is randomly divided into three subsets to independently train three local agents. 
Subsequently, RGIA, DLG, GrandInv, GIFD, and DFLeak reconstruct the training data from the gradients uploaded by these local agents.

Figure~\ref{fig:conti_metric} shows the reconstruction state and action accuracy (i.e., MSE and RA) of RGIA, DLG, GrandInv, GIFD, and DFLeak in low-dimensional environments, where each method selects 1,000 tuples (i.e.,$(s,a,r,s’)$) for evaluation and reports the average performance.
The results demonstrate that RGIA yields lower reconstruction error (i.e., lower MSE) and higher reconstruction accuracy (RA) than the baseline methods.
While all methods rely on gradient inversion to reconstruct data, DLG, GrandInv, GIFD, and DFLeak fail to address the pseudo-solution problem in FRL. 
As a result, reconstructed data from these methods approximates the true gradient values but deviates from the true data distribution, resulting in higher reconstruction errors.
The experimental results for FrozenLake can be found in Appendix~\ref{supsec:low_dim}.

\subsection{Reconstructed State Distribution  Analysis}
To evaluate whether our proposed regularization can mitigate the pseudo-solution issue by promoting convergence to consistent solutions, we perform a multi-start optimization consistency analysis.
Specifically, we randomly select a set of real transitions $(s, a, r, s')$ and compute their corresponding gradient $ \nabla_\theta \mathcal{L}_{\text{real}}$.
For each gradient, we initialize 10 different random starting variables $(\tilde{s}_0, \tilde{a}_0, \tilde{r}_0, \tilde{s}'_0) \in \mathcal{N}(0,1)$, and independently optimize these variables by utilizing GIA and RGIA, where GIA is the method without regularization terms.

For each method, we collect 100 reconstructed samples and evaluate intra-group consistency in the latent embedding space, with a primary focus on the reconstructed states $\tilde{s}$.
The consistency is evaluated using three metrics: (1) the average pairwise Euclidean distance (ED) between reconstructed states (lower is better); (2) the Silhouette Score (SS) reflecting clustering quality (higher is better); and (3) the covariance determinant (CD) capturing the dispersion of reconstructed states (lower is better).
As shown in Table~\ref{tab:multi_start}, RGIA substantially improves consistency across all metrics compared to GIA.
Specifically, ED is reduced by more than 60$\%$, the SS score is increased by more than twofold, and CD is also significantly decreased.
These results indicate that the reconstructed samples form a tighter and more coherent cluster in the latent state space.


\begin{table}[]
\caption{Multi-start consistency analysis. Regularization significantly reduces dispersion among reconstructed samples.}
\label{tab:multi_start}
\begin{tabular}{l|l|c|c|c}
\hline
Environment               & Method & ED$\downarrow$            & SS$\uparrow$          & CD$\downarrow$              \\ \hline
\multirow{2}{*}{Breakout} & GIA    & 0.284          & 0.32          & 5.12e-2          \\
                          & RGIA   & \textbf{0.108} & \textbf{0.71} & \textbf{1.24e-3} \\ \hline
\multirow{2}{*}{Seaquest} & GIA    & 0.264          & 0.31          & 4.98e-2          \\
                          & RGIA   & \textbf{0.118} & \textbf{0.69} & \textbf{1.13e-3} \\ \hline
\multirow{2}{*}{Ant}      & GIA    & 0.134          & 0.39          & 4.48e-2          \\
                          & RGIA   & \textbf{0.108} & \textbf{0.76} & \textbf{1.03e-3} \\ \hline
\end{tabular}
\end{table}

To further illustrate the effect of regularization, we apply PCA~\cite{pearson1901liii} to project the reconstructed states into a 2D plane. 
As shown in Figure~\ref{fig:PC_sample}, the unregularized reconstructions (red) are scattered widely across the space, whereas RGIA (green) produces samples clustered closely around the true state (blue star).
These results demonstrate that the prior-driven regularization terms not only improve the accuracy of individual reconstructions but also promote semantic consistency across multiple solutions derived from the same gradient. 
\begin{figure}
    \centering
    \includegraphics[width=0.88\linewidth]{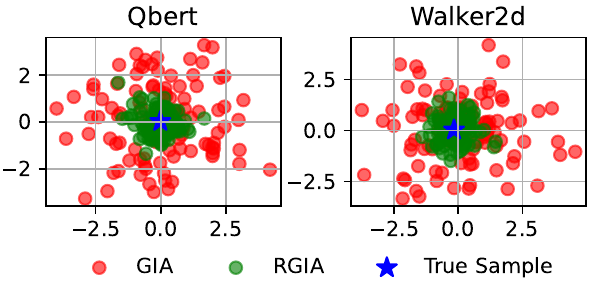}
    \caption{PCA visualization of reconstructed states from 10 random initializations.}
    \label{fig:PC_sample}
\end{figure}

\begin{table}[]
\centering
\caption{Gaussian noise defense results. 'Rewards' denote the cumulative return obtained by the learning policy of FRL within the corresponding environment.}
\label{tab:guassian_nosie}
\begin{tabular}{llcccc}
\hline
Variance & Environment                                                             & Loss$\downarrow$                                                                & Rewards$\uparrow$                                                    & MSE$\downarrow$                                                                & RA$\uparrow$                                                        \\ \hline
1e-1     & \begin{tabular}[c]{@{}l@{}}Hopper\\ Walker2d\\ Halfcheetah\end{tabular} & \begin{tabular}[c]{@{}c@{}}0.72\\ 0.48\\ 0.91\end{tabular}          & \begin{tabular}[c]{@{}c@{}}20.3\\ 34.5\\ 21.6\end{tabular}  & \begin{tabular}[c]{@{}c@{}}0.29\\ 0.25\\ 0.39\end{tabular}          & \begin{tabular}[c]{@{}c@{}}0.02\\ 0.09\\ 0.10\end{tabular} \\ \hline
1e-2     & \begin{tabular}[c]{@{}l@{}}Hopper\\ Walker2d\\ Halfcheetah\end{tabular} & \begin{tabular}[c]{@{}c@{}}0.54\\ 0.28\\ 0.69\end{tabular}          & \begin{tabular}[c]{@{}c@{}}35.\\ 42.6\\ 41.3\end{tabular}   & \begin{tabular}[c]{@{}c@{}}0.21\\ 0.19\\ 0.24\end{tabular}          & \begin{tabular}[c]{@{}c@{}}0.10\\ 0.13\\ 0.14\end{tabular} \\ \hline
1e-3     & \begin{tabular}[c]{@{}l@{}}Hopper\\ Walker2d\\ Halfcheetah\end{tabular} & \begin{tabular}[c]{@{}c@{}}0.028\\ 0.025\\ 0.037\end{tabular}       & \begin{tabular}[c]{@{}c@{}}73.5\\ 89.7\\ 68.4\end{tabular}  & \begin{tabular}[c]{@{}c@{}}0.18e-2\\ 0.15e-1\\ 0.19e-1\end{tabular} & \begin{tabular}[c]{@{}c@{}}0.63\\ 0.71\\ 0.69\end{tabular} \\ \hline
1e-4     & \begin{tabular}[c]{@{}l@{}}Hopper\\ Walker2d\\ Halfcheetah\end{tabular} & \begin{tabular}[c]{@{}c@{}}0.15e-2\\ 0.81e-2\\ 0.69e-2\end{tabular} & \begin{tabular}[c]{@{}c@{}}90.5\\ 101.7\\ 85.6\end{tabular} & \begin{tabular}[c]{@{}c@{}}0.12e-3\\ 0.13e-3\\ 0.18e-2\end{tabular} & \begin{tabular}[c]{@{}c@{}}0.78\\ 0.80\\ 0.76\end{tabular} \\ \hline
1e-5     & \begin{tabular}[c]{@{}l@{}}Hopper\\ Walker2d\\ Halfcheetah\end{tabular} & \begin{tabular}[c]{@{}c@{}}0.04e-2\\ 0.13e-2\\ 0.08e-2\end{tabular} & \begin{tabular}[c]{@{}c@{}}95.5\\ 108.7\\ 89.6\end{tabular} & \begin{tabular}[c]{@{}c@{}}0.10e-5\\ 0.12e-5\\ 0.18e-3\end{tabular} & \begin{tabular}[c]{@{}c@{}}0.81\\ 0.84\\ 0.79\end{tabular} \\ \hline
\end{tabular}
\end{table}

\subsection{RGIA Defense Mechanism Analysis}
\label{sec:defencse_dp}
A potential defense mechanism against RGIA attacks is the differential privacy learning~\cite{DBLP:KianiKDDB25}, which achieves differential privacy  (DP) protection by adding carefully designed noise (typically Gaussian noise or Laplace noise) before uploading gradients.
By leveraging this method, even if attackers know the query results, they still cannot determine whether a specific individual is included in the dataset, and thus data privacy is preserved.
To evaluate the attack capability of RGIA against this DP-based defense, we apply Gaussian noise and Laplace noise with a mean of zero and variances spanning from 1e-1 to 1e-5 to the gradients of the value function.

Table~\ref{tab:guassian_nosie} presents the defense results of RGIA under Gaussian noise (see Appendix~\ref{supsec:dp_def} for Laplace noise). 
These results demonstrate that DP can provide effective protection against RGIA attacks, and the defense effectiveness is not strongly dependent on the type of noise.
However, this defense also degrades policy performance.
Specifically, low-variance noises (1e-4 to 1e-5) minimally impair policy performance but fail to prevent gradient leakage attacks due to insufficient perturbation.
As the variance increases, although the reconstruction accuracy of RGIA progressively declines, it remains capable of recovering training data from gradients.
When the variance is greater than 1e-2, DP noises effectively hinder RGIA from reconstructing training data from the gradients, but they also severely impair the performance of the policy.

In addition, we test two defense mechanisms: homomorphic encryption~\cite{DBLP:LiuKC24} and gradient quantization~\cite{DBLP:OviDRG23} (See Appendix~\ref {supsec:dp_def} for detailed experimental settings).
The experimental results demonstrate that homomorphic encryption and gradient quantization remain insufficient to perfectly defend against RGIA.

\subsection{Leakage Gradient Scale Analysis}
The scale of gradients plays a critical role in determining the quality of reconstructed training data and the optimization difficulty in the RGIA method.
We conduct comparative experiments that evaluate reconstruction performance by using gradients from single sample versus batches of 3, 5, 8, and 10 samples.
Specifically, we select Atari Pong and MuJoCo Hopper, and compute the corresponding gradient values based on a pretrained Q-network.
For each batch size, we sample the corresponding number of tuples $(s,a,r,s'$) from the training data and compute gradients of the Q-network loss function with respect to these samples.
These gradients are treated as the leaked gradients and serve as the optimization objective for the RGIA algorithm.
To effectively measure the differences in data reconstructed from different batch sizes, we employ three metrics: the gradient matching error (GME), the mean squared error (MSE) of reconstruction state, and the reconstruction time (RT).

\begin{table}[]
\caption{The impact of different batches on RGIA.}
\begin{tabular}{lllll}
\hline
Environment & Batch                                                      & GME$\downarrow$                                                                                    & MSE$\downarrow$                                                                                     & RT$\downarrow$ (s)                                                                       \\ \hline
Pong        & \begin{tabular}[c]{@{}l@{}}1\\ 3\\ 5\\ 8\\ 10\end{tabular} & \begin{tabular}[c]{@{}l@{}}8.3e-6\\ 9.3e-5\\ 7.3e-3\\ 3.4e-2\\ 4.2e-1\end{tabular}     & \begin{tabular}[c]{@{}l@{}}0.08e-4\\ 0.27e-3\\ 0.10e-2\\ 0.71e-2\\ 0.044\end{tabular}    & \begin{tabular}[c]{@{}l@{}}44.2\\ 253.3\\ 291.9\\ 362.1\\ 360.8\end{tabular} \\ \hline
Hopper      & \begin{tabular}[c]{@{}l@{}}1\\ 3\\ 5\\ 8\\ 10\end{tabular} & \begin{tabular}[c]{@{}l@{}}5.3e-7\\ 1.38e-4\\ 1.77e-3\\ 1.79e-2\\ 2.09e-2\end{tabular} & \begin{tabular}[c]{@{}l@{}}1.19e-6\\ 3.58e-5\\ 7.64e-5\\ 1.23e-4\\ 2.99e-4\end{tabular} & \begin{tabular}[c]{@{}l@{}}16.6\\ 17.9\\ 30.2\\ 61.9\\ 90.6\end{tabular}     \\ \hline
\end{tabular}
\label{tab:batch_exp}
\end{table}

The experimental results in Table~\ref{tab:batch_exp} indicate that as the leaked gradient batch size increases, the computational time for reconstructing the data increases significantly, and both the gradient matching error and MSE of the reconstructed states exhibit an upward trend.
Specifically, in the Pong environment, the single-sample approach proves to be the optimal reconstruction way, exhibiting significant advantages in terms of GMS, MSE, and RT.
Critically, batch sizes exceeding 3 frequently cause reconstruction failures, producing unusable noise-dominated outputs.
Unlike in the Pong environment, RGIA exhibits better batch reconstruction capability in the Hopper environment. 
For example, when the batch size is 5, both the gradient error and reconstruction error remain relatively low.
This phenomenon may be attributed to the advantage of Hopper having a low-dimensional state space.

\begin{figure*}
    \centering
    \includegraphics[width=0.83\linewidth]{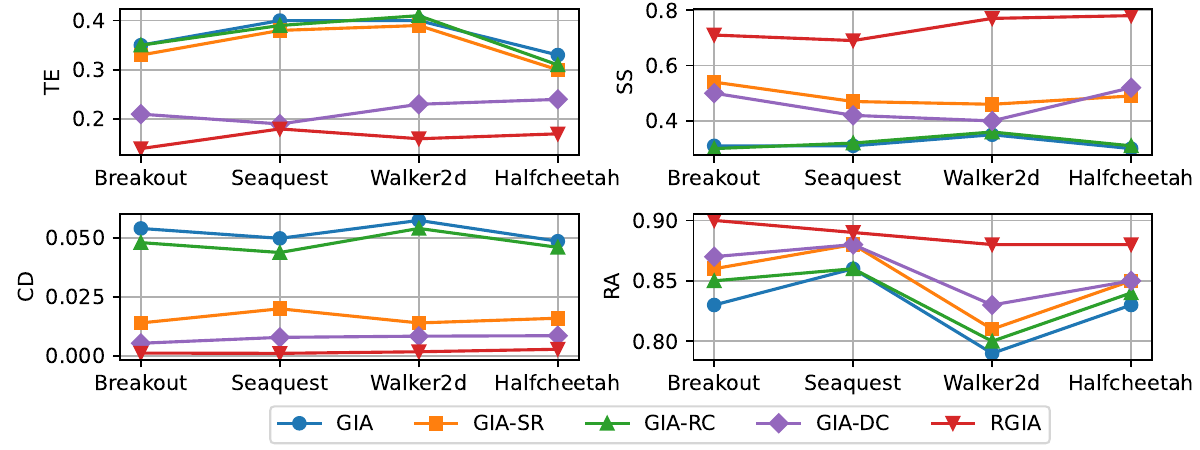}
    \caption{Ablation on regularization terms across different environments. Each regularizer contributes to a different aspect of inversion quality, and their combination yields the most coherent and plausible reconstructions.}
    \label{fig:ablation_multienv}
\end{figure*}

\subsection{Ablation Analysis}
\noindent\textbf{Ablation on regularization terms}.
To understand the individual contributions of the proposed regularization terms, we conduct a comprehensive ablation study by enabling one regularization term at a time while keeping the others disabled.
Specifically, we evaluate the following variants: (1) the baseline without regularization (GIA); (2) GIA with state regularization (GIA-SR); (3) GIA with reward range constraint (GIA-RC); (4) GIA with dynamics consistency  regularization (GIA-DC); and (5) the full method combining all three regularizations (RGIA).
For each configuration, we measure performance differences using four metrics (TE, SS, CD, and RA) across four diverse environments: Breakout, Seaquest, Walker2d, and FrozenLake.

Figure~\ref{fig:ablation_multienv} presents quantitative results averaged over 10 random seeds for each experimental setting. 
As shown in the figure, state regularization term enhances clustering performance (measured by silhouette score, SS) and reduces pairwise dispersion across different environments. 
While the reward range constraint contributes little to the transition error (TE), it plays a crucial role in filtering out implausible rewards, thereby increasing the structural consistency and the success rate of action construction.
Unlike the reward range constraint, the dynamics consistency regularization can significantly reduce transition error and consistently improve silhouette scores.
By integrating three regularizations, RGIA reconstructs semantically coherent and consistent samples, and achieves the best or near-best performance across all evaluation metrics.

$\quad$
\\
\textbf{Hyperparameter sensitivity studies}.
To further evaluate the robustness of RGIA, we conduct a sensitivity analysis by varying the weights of three regularization terms: state regularization term ($\alpha$), reward regularization term ($\beta$), and dynamics consistency regularization term ($\gamma$).
Specifically, we fix the gradient source $\nabla_\theta \mathcal{L}_{\text{real}}$ and perform inversion under different hyperparameter settings.
Inspired by the previous working hyperparameter settings~\cite{DBLP:KostrikovNL22}, we adopt a log-scale grid $\{0.0,0.01,0.1,1.0,10.0\}$ for regularization weights to efficiently explore the effect of these weights across several orders of magnitude. 
To be specific, we vary $\alpha \in \{0.0, 0.01, 0.1,
1.0, 10.0\}$ while fixing $\beta = 1.0$ and $\gamma = 1.0$. 

\begin{table}[]
\centering
\caption{Effect of state regularization weight ($\alpha$) on reconstruction quality.}
\label{tab:alpha_sensitivity}
\begin{tabular}{l|c|c|c}
\hline
         & PSNR$\uparrow$               & SSMIM$\uparrow$              & RA$\uparrow$                 \\
$\alpha$ & Pong/Qbert         & Pong/Qbert         & Pong/Qbert         \\ \hline
0.0      & 40.3/39.8          & 0.93/0.91          & 0.85/0.81          \\
0.01     & 41.7/43.5          & 0.94/0.93          & 0.86/0.84          \\
0.1      & 44.3/49.7          & 0.96/0.96          & 0.89/0.85          \\
1.0      & \textbf{46.3/51.3} & \textbf{0.96/0.98} & \textbf{0.93/0.89} \\
10.0     & 43.4/50.9          & 0.89/0.95          & 0.84/0.85          \\ \hline
\end{tabular}
\end{table}

Table~\ref{tab:alpha_sensitivity} demonstrates that a moderate $\alpha$ setting ($\alpha = 1.0$) optimally enhances RGIA performance across key reconstruction metrics: PSNR, SSIM, and RA.
These improvements appear consistently across both Pong and Qbert tasks.
However, both overly small (e.g., $\alpha = 0.0$) and overly large (e.g., $\alpha = 10.0$) values of $\alpha$ can result in noticeable performance degradation, with the most significant drop observed at lower values of $\alpha$.

\begin{table}[]
\caption{Effect of reward range constraint ($\beta$) on reward reconstruction validity.}
\label{tab:beta_sensitivity}
\begin{tabular}{l|c|c}
\hline
        & Reward Error $\downarrow$ & Invalid $\tilde{r}$ Ratio(\%)$\downarrow$ \\
$\beta$ & Pong/Qbert   & Pong/Qbert                             \\ \hline
0.0     & 0.33/0.28    & 0.27/0.19                              \\
0.01    & 0.29/0.19    & 0.12/0.07                              \\
0.1     & 0.14/0.13    & 0.00/0.00                              \\
1.0     & 0.08/0.09    & 0.00/0.00                              \\
10.0    & 0.08/0.08    & 0.00/0.00                              \\ \hline
\end{tabular}
\end{table}

Similar to the sensitivity experiment on $\alpha$, we vary $\beta$ while keeping $\alpha = 1.0$ and $\gamma = 1.0$ fixed. 
Experimental results include the absolute reward reconstruction error and the proportion of invalid samples, where invalid samples are defined as reward values $\tilde{r}$ outside the legal range $[r_{\min}, r_{\max}]$ imposed by the environment.
As shown in Table~\ref{tab:beta_sensitivity}, varying $\beta$ within the range of 0.0 to 10.0 significantly alters the reward error and the proportion of invalid reconstruction samples.
In particular, a small value (e.g., $\beta=0.01$) effectively eliminates invalid samples. 
Moreover, experimental results show that, unlike state regularization, reward constraints are not highly sensitive to precise tuning.

To assess the contribution of the dynamics consistency regularization, we conduct a sensitivity analysis by varying the regularization weight $\gamma$.
Results (summarized in Appendix~\ref{supsec:para_sens}) demonstrate $\gamma$ around $1.0$ significantly improves transition realism and reduces sample dispersion, but overly large values over-constrain the reconstruction performance of RGIA.

$\quad$
\\
\noindent\textbf{Bias analysis of state prior $\mu_s$}.
To investigate the impact of deviations in the state prior $\mu_s$ used for regularization in RGIA on attack performance, we design an ablation experiment with biased priors.
Detailed experimental setup and results are provided in Appendix~\ref{supsec:state_prior}.
The experimental results demonstrate that higher-quality priors can significantly improve the accuracy of data reconstruction.
Specifically, in the Pong and Qbert environments, as the amount of data used to estimate the state prior increases, the reconstructed states become increasingly similar to the original ones, as measured by SSIM. 
Meanwhile, the action recovery accuracy (RA) also improves significantly.
However, when the prior data exceeds 2000 samples, neither SSIM nor RA increases significantly.
Similarly, Hopper and Walker2d show a similar trend. 
When the sample size exceeds 3000, $\mu_s$ no longer has a significant effect on the state MSE and RA.

$\quad$
\\
\noindent \textbf{Transition model analysis}.
In addition, we conduct an ablation study on the transition model $f$ trained with prior data (refer to Appendix~\ref{supsec:dynamic_model_affect}). 
Experimental results show that the transition model trained with only 0.3$\%$ of the data in the dataset as prior data can effectively regularize dynamic consistency, thereby improving the reconstruction ability of RGIA.
In addition, an interesting phenomenon is that as the amount of prior data increases, the transition model does not significantly improve the reconstruction ability of RGIA.

\section{Conclusions}
In this paper, we investigate gradient inversion attacks in FRL, which aim to recover local training data from publicly shared gradients.
To accurately reconstruct the training data, we propose a novel attack method called RGIA, which addresses the pseudo-solution problem by incorporating prior-knowledge-based regularization terms on states, rewards, and transition dynamics during the optimization process to ensure that the reconstructed data remains close to the true data transition distribution.
Extensive experimental results on control and autonomous driving tasks demonstrate that RGIA can effectively mitigate the pseudo-solution problem and ultimately recover the training data of the agent.
Future work will involve improving the RGIA to enable large-scale generation of training data and developing effective defense mechanisms against gradient privacy leakage in FRL.

\section*{Ethical Considerations}
Our work proposes a regularization-based gradient inversion attack (RGIA) for reconstructing training from policy gradients in FRL. While this research advances understanding of privacy vulnerabilities in RL systems, it also raises potential ethical concerns.

\noindent \textbf{Privacy and security risks.} The proposed method can reconstruct state, action, and reward sequences from shared gradients, which, in an FRL setting, may contain sensitive user or environment information. Malicious actors could misuse this technique to compromise the confidentiality of proprietary or personal datasets.

\noindent\textbf{Misuse scenarios.} If applied without safeguards, the method could facilitate targeted attacks on deployed RL agents, such as autonomous robots or industrial control systems, leading to the leakage of private data.

\noindent\textbf{Mitigation strategies.} To reduce these risks, we recommend limiting access to model parameters and gradients. Furthermore, as discussed in section~\ref{sec:defencse_dp}, introducing differential privacy into shared gradients can also mitigate RGIA attacks.

Overall, our work is intended to improve the security and robustness of FRL systems by revealing potential vulnerabilities and guiding the development of effective defenses.

\bibliographystyle{ACM-Reference-Format}
\bibliography{cas-refs}

\appendix

\clearpage

\section{Theorem Proof}
\label{supsec:proof}

\subsection{Lemma Proof}
\label{supsec:lemma_proof}
\begin{lemma}[Reward Space Constraint]
\label{lem:sup_reward_constraint}
Given the optimization problem:
$\min_{\tilde{x}} J(\tilde{x}) = F(\tilde{x}) + \lambda \mathcal{R}_r(\tilde{r}), \quad \lambda > 0$,
where $F(\tilde{x})$ is the primary objective function and $\mathcal{R}_r$ is the regularization term defined as:
$\mathcal{R}_r(\tilde{r}) = \left(\text{ReLU}(\tilde{r} - r_{\max})\right)^2 + \left(\text{ReLU}(r_{\min} - \tilde{r})\right)^2$,
then for any optimal solution $\tilde{x}^*$, the reward component $\tilde{r}^*$ must satisfy $\tilde{r}^* \in [r_{\min}, r_{\max}]$. This holds under the assumption that $F(\tilde{x})$ is continuous with respect to $\tilde{r}$.
\end{lemma}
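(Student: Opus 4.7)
The plan is to argue by contradiction. Suppose an optimal solution $\tilde{x}^* = (\tilde{s}^*,\tilde{a}^*,\tilde{r}^*,\tilde{s}'^*)$ has $\tilde{r}^* > r_{\max}$; the symmetric case $\tilde{r}^* < r_{\min}$ is handled identically. I would construct a competitor $\tilde{x}^{\dagger}$ that agrees with $\tilde{x}^*$ in every coordinate except that its reward component is projected onto the admissible interval, i.e.\ $\tilde{r}^{\dagger} = r_{\max}$. By the definition of $\mathcal{R}_r$, this projection gives $\mathcal{R}_r(\tilde{r}^{\dagger}) = 0$ while $\mathcal{R}_r(\tilde{r}^*) = (\tilde{r}^* - r_{\max})^2 > 0$, so the penalty strictly drops by $\lambda(\tilde{r}^* - r_{\max})^2$.

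Next I would compare the two objective values,
\begin{equation*}
J(\tilde{x}^*) - J(\tilde{x}^{\dagger}) = \bigl[F(\tilde{x}^*) - F(\tilde{x}^{\dagger})\bigr] + \lambda(\tilde{r}^* - r_{\max})^2,
\end{equation*}
and show that the right-hand side is strictly positive, contradicting optimality of $\tilde{x}^*$. To do this I would invoke the first-order optimality condition in the $\tilde{r}$ coordinate: on the region $\tilde{r} > r_{\max}$ the penalty is smooth with $\partial \mathcal{R}_r/\partial \tilde{r} = 2(\tilde{r} - r_{\max})$, so stationarity forces $\partial F/\partial \tilde{r}|_{\tilde{r}^*} = -2\lambda(\tilde{r}^* - r_{\max})$. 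Combined with a one-dimensional mean-value estimate along the segment from $\tilde{r}^*$ down to $r_{\max}$, this upper-bounds $F(\tilde{x}^{\dagger}) - F(\tilde{x}^*)$ linearly in $(\tilde{r}^* - r_{\max})$, whereas the penalty saving is quadratic in the same quantity; for any fixed $\lambda > 0$ the quadratic term dominates once $\tilde{r}^* - r_{\max}$ is non-negligible, and the competitor strictly improves $J$.

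The main obstacle is that mere continuity of $F$ in $\tilde{r}$ (the hypothesis stated in the lemma) is not quite strong enough to make the linear-versus-quadratic argument airtight, since $F$ could in principle drop faster than the penalty grows over short intervals. I would therefore plan to reinforce the argument in one of two ways: either (i) use the gradient-matching structure of $F$ in RGIA, where the dependence on $\tilde{r}$ enters only through the TD target and is quadratic, to bound $|\partial F/\partial \tilde{r}|$ explicitly; or (ii) strengthen the hypothesis to local Lipschitzness of $F$ in the reward coordinate with constant $L_F$, giving the quantitative bound
\begin{equation*}
\tilde{r}^* - r_{\max} \;\le\; \tfrac{L_F}{2\lambda},
\end{equation*}
so that as $\lambda \to \infty$ any optimal $\tilde{r}^*$ is forced into $[r_{\min}, r_{\max}]$. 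With this refinement in hand, applying the symmetric argument on the lower side closes the case $\tilde{r}^* < r_{\min}$ and completes the containment $\tilde{r}^* \in [r_{\min}, r_{\max}]$.
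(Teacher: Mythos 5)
Your overall strategy is the same as the paper's: argue by contradiction, perturb only the reward coordinate of a putative optimum with $\tilde{r}^* > r_{\max}$ toward the admissible interval, and show the penalty saving outweighs any change in $F$. The paper uses the midpoint $\tilde{r}_{\text{new}} = (\tilde{r}^* + r_{\max})/2$ as the competitor, whereas you project all the way to $r_{\max}$ so that $\mathcal{R}_r(\tilde{r}^{\dagger}) = 0$; this is a cosmetic difference. The substantive difference is that you explicitly flag that mere continuity of $F$ in $\tilde{r}$ is not enough to close the argument, and you are right: this is a genuine gap, and it afflicts the paper's own proof as well. The paper fixes $\tilde{r}_{\text{new}}$ at the midpoint, computes a penalty saving of $\tfrac{3}{4}\lambda(\tilde{r}^* - r_{\max})^2$, and then invokes continuity to get $|\Delta F| < \epsilon$ for $|\tilde{r}_{\text{new}} - \tilde{r}^*| < \delta$, instructing the reader to ``select $\tilde{r}_{\text{new}}$ close enough.'' But $|\tilde{r}_{\text{new}} - \tilde{r}^*| = (\tilde{r}^* - r_{\max})/2$ is a fixed quantity that need not be smaller than the $\delta$ furnished by continuity, and if one instead shrinks the perturbation to fit inside $\delta$, the penalty saving shrinks with it (it is $O(t)$ in the step size $t$, not a fixed constant), so the $\epsilon$--$\delta$ bookkeeping never closes. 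Indeed the lemma is false under continuity alone: taking $F(\tilde{x}) = -2\lambda\min(\mathcal{R}_r(\tilde{r}),1)$ gives a continuous $F$ whose global minimizers of $J$ sit at $\tilde{r} = r_{\max}+1$ and $\tilde{r} = r_{\min}-1$, strictly outside the interval.

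That said, your repair does not fully rescue the statement either. Under local Lipschitzness of $F$ in $\tilde{r}$ with constant $L_F$, the balance between a linear decrease of $F$ and a quadratic penalty yields only $\tilde{r}^* \in [\,r_{\min} - L_F/(2\lambda),\; r_{\max} + L_F/(2\lambda)\,]$, i.e.\ an $O(1/\lambda)$-relaxed containment, not the exact containment $\tilde{r}^* \in [r_{\min}, r_{\max}]$ claimed by the lemma. (Your first-order stationarity step also quietly assumes differentiability and a derivative bound along the whole segment, not just at $\tilde{r}^*$, which is exactly what the Lipschitz hypothesis supplies, so option (ii) is the one to keep.) Exact containment would require an additional structural assumption, e.g.\ that $F$ is non-decreasing in $\tilde{r}$ on $(r_{\max},\infty)$ and non-increasing on $(-\infty, r_{\min})$, or that $F$ does not depend on $\tilde{r}$ outside the interval. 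Notably, the paper's own proof of Theorem~\ref{theorem:one} only ever uses the relaxed conclusion $\tilde{r}^* \in [r_{\min}-\epsilon,\, r_{\max}+\epsilon]$, so your weaker, quantitative version is both more honest and sufficient for the downstream result; I would state and prove that version instead.
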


\begin{proof}
We proceed by contradiction. Assume there exists an optimal solution $\tilde{x}^*$ such that $\tilde{r}^* \notin [r_{\min}, r_{\max}]$. There are two cases to consider:

\textbf{Case 1: $\tilde{r}^* > r_{\max}$} \\
Since $\tilde{r}^* > r_{\max} \geq r_{\min}$, the regularization term evaluates to:
\begin{equation}
    \mathcal{R}_r(\tilde{r}^*) = (\tilde{r}^* - r_{\max})^2 > 0.
\end{equation}
Consider a new candidate solution $\tilde{x}_{\text{new}}$ identical to $\tilde{x}^*$ except for the reward component, where we set:
\begin{equation}
    \tilde{r}_{\text{new}} = \frac{\tilde{r}^* + r_{\max}}{2}.
\end{equation}
Note that $\tilde{r}_{\text{new}} > r_{\max}$ since $\tilde{r}^* > r_{\max}$. The regularization term at this new point is:
\begin{equation}
    \mathcal{R}_r(\tilde{r}_{\text{new}}) = \left(\frac{\tilde{r}^* + r_{\max}}{2} - r_{\max}\right)^2 = \left(\frac{\tilde{r}^* - r_{\max}}{2}\right)^2.
\end{equation}
The difference in total objective values is:
\begin{equation}
    \begin{split}
        J(\tilde{x}_{\text{new}}) - J(\tilde{x}^*) &= \underbrace{\left[F(\tilde{x}_{\text{new}}) - F(\tilde{x}^*)\right]}_{\Delta F} \\ &+ \lambda \left[ \left(\frac{\tilde{r}^* - r_{\max}}{2}\right)^2 - (\tilde{r}^* - r_{\max})^2 \right].
    \end{split}
\end{equation}
Simplify the regularization difference:
\begin{equation}
    \lambda \left[ \frac{(\tilde{r}^* - r_{\max})^2}{4} - (\tilde{r}^* - r_{\max})^2 \right] = -\lambda \frac{3}{4} (\tilde{r}^* - r_{\max})^2 < 0.
\end{equation}
By continuity of $F$, for $\epsilon = \lambda \frac{3}{8} (\tilde{r}^* - r_{\max})^2 > 0$, there exists $\delta > 0$ such that:
\begin{equation}
    |\tilde{r}_{\text{new}} - \tilde{r}^*| < \delta \implies |\Delta F| < \epsilon.
\end{equation}
Select $\tilde{r}_{\text{new}}$ close enough to $\tilde{r}^*$ to satisfy this condition. Then:
\begin{equation}
    \begin{split}
        J(\tilde{x}_{\text{new}}) - J(\tilde{x}^*) &< \epsilon - \lambda \frac{3}{4} (\tilde{r}^* - r_{\max})^2 \\ &= \lambda \frac{3}{8} (\tilde{r}^* - r_{\max})^2 - \lambda \frac{3}{4} (\tilde{r}^* - r_{\max})^2 \\ &= -\lambda \frac{3}{8} (\tilde{r}^* - r_{\max})^2 < 0.
    \end{split}
\end{equation}

Thus $J(\tilde{x}_{\text{new}}) < J(\tilde{x}^*)$, contradicting the optimality of $\tilde{x}^*$.

\textbf{Case 2: $\tilde{r}^* < r_{\min}$} \\
The proof is symmetric to Case 1. Set:
\begin{equation}
    \tilde{r}_{\text{new}} = \frac{\tilde{r}^* + r_{\min}}{2} < r_{\min}.
\end{equation}
The regularization difference is:
\begin{equation}
    \lambda \left[ \left(\frac{\tilde{r}^* - r_{\min}}{2}\right)^2 - (\tilde{r}^* - r_{\min})^2 \right] = -\lambda \frac{3}{4} (\tilde{r}^* - r_{\min})^2 < 0.
\end{equation}

By continuity of $F$, we can find $\tilde{r}_{\text{new}}$ sufficiently close to $\tilde{r}^*$ such that:
\[
J(\tilde{x}_{\text{new}}) - J(\tilde{x}^*) < 0,
\]
again contradicting optimality.

Since both cases lead to contradictions, our initial assumption must be false. Therefore, for any optimal solution $\tilde{x}^*$, we must have $\tilde{r}^* \in [r_{\min}, r_{\max}]$.
\end{proof}



Now, we combine the above remarks and lemma to give the following core theorem.

\subsection{Theorem Proof}
\label{supsec:theorem_proof}
\begin{theorem}[Regularization for Compressing the Solution Space]
Let $\mathcal{X}^*_{\text{reg}}$ denote the set of global optima to the regularized problem. By introducing a prior-informed regularization term $\mathcal{R}_{\text{total}}(x)$, the solution space is effectively compressed such that 
$\mathcal{X}^*_{\text{reg}} \subseteq \mathcal{X}^\epsilon_{\text{orig}} \cap \mathcal{C}_{\text{prior}}$,
where $\mathcal{X}^\epsilon_{\text{orig}}$ denotes the $\epsilon$-approximate solution space with respect to gradient matching, and $\mathcal{C}_{\text{prior}}$ is the constraint set induced by the regularization priors.
\end{theorem}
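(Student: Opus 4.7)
The plan is a direct two-step inclusion argument. Fix an arbitrary $\tilde{x}^* \in \mathcal{X}^*_{\text{reg}}$, i.e.\ any global minimiser of $J(\tilde{x}) = F(\tilde{x}) + \lambda \mathcal{R}_{\text{total}}(\tilde{x})$, where $F(\tilde{x}) = \|\nabla_\theta L(\tilde{x};\theta) - g_{\text{obs}}\|^2$ and $\mathcal{R}_{\text{total}} = \alpha \mathcal{R}_s + \beta \mathcal{R}_r + \gamma \mathcal{R}_f$. I would show $\tilde{x}^* \in \mathcal{X}^\epsilon_{\text{orig}}$ and $\tilde{x}^* \in \mathcal{C}_{\text{prior}}$ as two separate facts; the desired containment then follows by taking their intersection.

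For the first inclusion, the key observation is that the original solution set is nonempty: the ground-truth datum $x_{\text{real}}$ satisfies $F(x_{\text{real}}) = 0$, so $\mathcal{X}_{\text{orig}} \neq \emptyset$ by Definition \ref{def:ossp}. Pick any $x_0 \in \mathcal{X}_{\text{orig}}$. By optimality of $\tilde{x}^*$ one has $F(\tilde{x}^*) + \lambda \mathcal{R}_{\text{total}}(\tilde{x}^*) \le F(x_0) + \lambda \mathcal{R}_{\text{total}}(x_0) = \lambda \mathcal{R}_{\text{total}}(x_0)$, which rearranges to $F(\tilde{x}^*) \le \lambda [\mathcal{R}_{\text{total}}(x_0) - \mathcal{R}_{\text{total}}(\tilde{x}^*)] \le \lambda \mathcal{R}_{\text{total}}(x_0)$. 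Taking the infimum over $x_0 \in \mathcal{X}_{\text{orig}}$ yields $F(\tilde{x}^*) \le \lambda \inf_{x_0 \in \mathcal{X}_{\text{orig}}} \mathcal{R}_{\text{total}}(x_0) =: \epsilon$, so $\tilde{x}^*$ lies in the $\epsilon$-approximate gradient-matching set $\mathcal{X}^\epsilon_{\text{orig}} = \{\tilde{x} : F(\tilde{x}) \le \epsilon\}$.

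For the second inclusion I would define $\mathcal{C}_{\text{prior}}$ as the joint feasibility set carved out by the three priors: the reward box $\tilde{r} \in [r_{\min}, r_{\max}]$, a state ball $\|\tilde{s} - \mu_s\|^2 \le \rho_s$, and a dynamics tube $\|f(\tilde{s}, \tilde{a}) - \tilde{s}'\|^2 \le \rho_f$, where $\rho_s$ and $\rho_f$ are the tightest radii compatible with the trade-off above. The reward clause follows directly from Lemma \ref{lem:lemma2} applied with weight $\lambda\beta$ to the $\mathcal{R}_r$ block of $\mathcal{R}_{\text{total}}$, since the other two regularizers are nonnegative and do not interfere with the contradiction argument at the reward coordinate. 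The state clause follows because the same optimality inequality gives $\lambda \alpha \|\tilde{s}^* - \mu_s\|^2 \le \lambda \mathcal{R}_{\text{total}}(x_0)$, hence $\rho_s := \mathcal{R}_{\text{total}}(x_0)/\alpha$ suffices; the dynamics clause is analogous with $\rho_f := \mathcal{R}_{\text{total}}(x_0)/\gamma$.

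The main obstacle I foresee is that the state and dynamics priors are soft quadratic penalties rather than hard constraints, so unlike the reward case of Lemma \ref{lem:lemma2} they do not force membership in a fixed set that is independent of $\lambda$. My way around this is precisely the construction above: define $\mathcal{C}_{\text{prior}}$ to be the sublevel set induced by the radii $\rho_s, \rho_f$ that the optimality inequality itself certifies. This matches the informal statements of Remark \ref{rem:remark1} and the dynamics remark (which describe a neighbourhood of $\mu_s$ and the learned manifold $M_f$) and makes the inclusion $\mathcal{X}^*_{\text{reg}} \subseteq \mathcal{X}^\epsilon_{\text{orig}} \cap \mathcal{C}_{\text{prior}}$ hold by construction, with $\epsilon$, $\rho_s$, $\rho_f$ all shrinking as $\lambda$, $\alpha$, $\gamma$ grow.
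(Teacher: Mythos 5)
Your proposal is correct and follows essentially the same route as the paper's proof: compare the global minimiser's objective value against a gradient-matching reference point (the paper uses $x_{\text{real}}$ itself, you take an infimum over all of $\mathcal{X}_{\text{orig}}$), read off the bound $F(\tilde{x}^*)\le\epsilon$ for the first inclusion, and define $\mathcal{C}_{\text{prior}}$ as the sublevel sets of the individual regularizers certified by the same inequality for the second. Your two refinements — optimising the reference point and invoking Lemma~\ref{lem:lemma2} to get the hard reward box $[r_{\min},r_{\max}]$ rather than the paper's $\epsilon$-enlarged interval — slightly sharpen but do not change the argument.
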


\begin{proof}
Consider the regularized optimization problem:
\begin{equation}
\min_{\tilde{x}} J(\tilde{x}) := \left\| \nabla_\theta \mathcal{L}(\tilde{x}; \theta) - g_{\text{obs}} \right\|^2 
+ \lambda \left( \alpha \mathcal{R}_s(\tilde{s}) + \beta \mathcal{R}_r(\tilde{r}) + \gamma \mathcal{R}_{f}(\tilde{x}) \right).
\tag{P2}
\end{equation}

Let $\tilde{x}^* = (\tilde{s}^*, \tilde{a}^*, \tilde{r}^*, \tilde{s}'^*)$ be any global minimizer of $J(\tilde{x})$. Then for any $\tilde{x}$,
\begin{equation}
    J(\tilde{x}^*) \leq J(\tilde{x}).
\end{equation}

In particular, consider the true data sample $x_{\text{real}} = (s_{\text{real}}, a_{\text{real}}, r_{\text{real}}, s'_{\text{real}})$ such that 
\begin{equation}
    g_{\text{obs}} = \nabla_\theta \mathcal{L}(x_{\text{real}}; \theta).
\end{equation}
Since $x_{\text{real}}$ is physically valid, it satisfies:
\begin{equation}
    \begin{split}
        \mathcal{R}_s(s_{\text{real}}) &= \|s_{\text{real}} - \mu_s\|^2 = \delta_s \geq 0, \\
\mathcal{R}_r(r_{\text{real}}) &= 0, \\
\mathcal{R}_{f}(x_{\text{real}}) &= \|f(s_{\text{real}}, a_{\text{real}}) - s'_{\text{real}}\|^2 = \delta_{\text{dyn}} \geq 0.
    \end{split}
\end{equation}

Then the objective at $x_{\text{real}}$ is:
\begin{equation}
    J(x_{\text{real}}) = \underbrace{\left\| \nabla_\theta \mathcal{L}(x_{\text{real}}; \theta) - g_{\text{obs}} \right\|^2}_{= 0}
+ \lambda \left( \alpha \delta_s + \gamma \delta_{\text{dyn}} \right) \triangleq \epsilon_J.
\end{equation}

Hence, since $\tilde{x}^*$ minimizes $J$, we have:
\begin{equation}
\left\| \nabla_\theta \mathcal{L}(\tilde{x}^*; \theta) - g_{\text{obs}} \right\|^2 + \lambda \mathcal{R}_{\text{total}}(\tilde{x}^*) \leq \epsilon_J. 
\end{equation}

From this, we can extract two implications:

\paragraph{(1) Gradient Matching Condition}
\begin{equation}
\left\| \nabla_\theta \mathcal{L}(\tilde{x}^*; \theta) - g_{\text{obs}} \right\|^2 \leq \epsilon_J. \tag{2}
\end{equation}
This implies $\tilde{x}^* \in \mathcal{X}^\epsilon_{\text{orig}}$.

\paragraph{(2) Regularization Constraint}
From (1), we also have:
\begin{equation}
\lambda \mathcal{R}_{\text{total}}(\tilde{x}^*) \leq \epsilon_J 
\quad \Longrightarrow \quad 
\mathcal{R}_{\text{total}}(\tilde{x}^*) \leq \frac{\epsilon_J}{\lambda}.
\end{equation}

Expanding the regularizer gives:
\begin{equation}
    \alpha \mathcal{R}_s(\tilde{s}^*) + \beta \mathcal{R}_r(\tilde{r}^*) + \gamma \mathcal{R}_{f}(\tilde{x}^*) \leq \frac{\epsilon_J}{\lambda}.
\end{equation}

Now, we analyze each term separately.

\paragraph{(a) State Prior:} If $\alpha > 0$,
\begin{equation}
    \mathcal{R}_s(\tilde{s}^*) = \|\tilde{s}^* - \mu_s\|^2 \leq \frac{\epsilon_J}{\lambda \alpha}. 
\end{equation}
Thus, $\tilde{s}^*$ lies in a hypersphere centered at $\mu_s$.

\paragraph{(b) Reward Prior:} If $\beta > 0$,
\begin{equation}
    \mathcal{R}_r(\tilde{r}^*) = \left( \text{ReLU}(\tilde{r}^* - r_{\max}) \right)^2 + \left( \text{ReLU}(r_{\min} - \tilde{r}^*) \right)^2 \leq \frac{\epsilon_J}{\lambda \beta}.
\end{equation}
This implies:
\begin{equation}
    \tilde{r}^* \in [r_{\min} - \epsilon, r_{\max} + \epsilon], \quad \text{for small } \epsilon.
\end{equation}

\paragraph{(c) Dynamics Prior:} If $\gamma > 0$,
\begin{equation}
    \mathcal{R}_{f}(\tilde{x}^*) = \|f(\tilde{s}^*, \tilde{a}^*) - \tilde{s}'^*\|^2 \leq \frac{\epsilon_J}{\lambda \gamma}.
\end{equation}

This implies \((\tilde{s}^*, \tilde{a}^*, \tilde{s}'^*)\) lies close to the manifold defined by the transition model.

\paragraph{Conclusion:} From inequalities (2), (5), (7), and (8), we conclude:
\begin{equation}
    \tilde{x}^* \in \mathcal{X}^\epsilon_{\text{orig}} \cap \mathcal{C}_{\text{prior}},
\end{equation}
which completes the proof.
\end{proof}

\section{Additional environmental setup information}

\subsection{Environmental Information}
\label{supsec:envion_infor}
The Atari environment serves as a classic benchmark for both offline and online RL, widely used to evaluate agent performance based on pixel input.
The following provides brief descriptions of four widely studied Atari games used in RL benchmarks:
\begin{itemize}
    \item  Pong is a table tennis match game where the objective is to score points and defeat opponents in a left-right duel. In this environment, the agent must learn to precisely rebound the ball using its paddle while simultaneously anticipating the opponent's behavior.

    \item Breakout is a brick-breaking game where the player controls a paddle to rebound a ball and break the bricks positioned above. To achieve this efficiently, the player must master the relationship between the rebound angle and the speed of the ball to maximize breaking efficiency.

    \item Qbert is a jumping game played on an isometric grid, where the goal is to make the character jump onto each cell to change its color. In this setting, the agent needs to handle complex spatial navigation, jump control, and avoid collisions with enemies.

    \item Seaquest is a side‑scrolling underwater shooting game where an agent controls a submarine to repel enemies and rescue divers, while periodically surfacing to replenish oxygen. These gameplay dynamics require the agent to coordinate multiple strategic goals, including offense, defense, navigation, and resource management.

\end{itemize}

MuJoCo is a high-performance physics engine widely used for simulating continuous control tasks in RL, particularly well-suited for detailed modeling of robot dynamics and contact mechanics.
Below are the four common control tasks in MuJoCo:

\begin{itemize}
    \item Hopper is a single-legged hopping robot designed to maintain stable forward locomotion. The challenge of the Hopper lies in maintaining body balance and absorbing impact during single–leg landings. Hence, the agent must learn to coordinate leg joints of  to achieve continuous hopping without falling.

    \item Walker2d simulates a two-dimensional bipedal robot designed to walk forward, which requires the agent to maintain dynamic balance and coordinate multiple joints. Consequently, the policy must learn to achieve an efficient gait while preventing falls.

    \item Ant is a quadruped robot with 8 controllable joints, designed to crawl forward on a two-dimensional plane. The crawling task presents a high-dimensional action space, where the primary challenge is to coordinate the movements of multiple legs to achieve stable and fast locomotion.

    \item Halfcheetah is a two-dimensional robotic agent that represents a simplified, bipedal version of a cheetah, equipped with multiple joints in its hind legs and torso to simulate agile movement. 
    This design enables dynamic locomotion patterns similar to running. 
    In this environment, the goal of the agent is to move forward efficiently by learning to coordinate the movements of its hind legs and torso.
    
\end{itemize}

Car Racing is a continuous control task with pixel-based observations and is commonly used to evaluate the performance of RL algorithms under high-dimensional visual inputs and continuous action spaces.
In this environment, the agent controls a car that navigates a randomly generated track, aiming to cover as much of the track as possible within a limited time to maximize its score.
Each state is represented by an RGB image capturing the road layout and the position of the car from a top-down perspective. 
The agent operates in a continuous action space defined by three control variables: steering angle, throttle, and brake.

FrozenLake is a classic discrete reinforcement learning environment where agents need to move from the starting point to the target position on a frozen lake surface while avoiding falling into an ice cave.
The environment is composed of a grid, where each cell can be a start point (S), safe ice (F), hole (H), or goal (G).
In this environment, the agent can take four actions: move up, down, left, or right.
However, due to slippery ground conditions, actions may stochastically deviate from the intended direction, which introduces stochasticity into policy learning.


\subsection{Evaluation Metric Describe}
\label{supsec:metirc}
For gradient inversion attacks involving image inputs, we primarily assess the similarity between the reconstructed and original images using PSNR and SSIM.
In particular, PSNR quantifies image distortion based on the pixel-wise mean squared error (MSE), with higher values indicating better reconstruction quality and lower distortion.
Unlike PSNR, SSIM focuses on measuring structural similarity in images and aligns with the perception of the human visual system.
Specifically, SSIM evaluates the similarity between two images by comparing their luminance, contrast, and structure components.
SSIM ranges from -1 to 1, with values closer to 1 signifying greater structural similarity between two images.
For low-dimensional state inputs, we employ MSE to directly measure the mean squared difference between the features of the original and reconstructed states.
For actions, we use recovery accuracy (RA) to measure the reconstruction capability of RGIA, defined as $\frac{n}{m} \times 100\%$, where $n$ is the number of correctly reconstructed actions and $m$ is the total number of evaluated actions.
Lastly, since rewards are continuous variables, we also use MSE to evaluate the accuracy of reconstructed rewards.

In addition, to better evaluate the impact of regularization terms on the reconstructed samples, we introduce consistency metrics, which include Euclidean distance (ED), Silhouette Score (SS), Covariance Determinant (CD), and Transition Error (TE).
Specifically, ED calculates the average pairwise Euclidean distance between reconstructed states, where a lower value indicates greater similarity.
SS quantifies structural consistency by measuring the clustering quality of reconstructed samples under multiple random initializations.
CD is the determinant of the state covariance matrix, which measures the spread of the reconstruction distribution, with lower values indicating tighter, more consistent solutions.
TE is an MSE between $f(\tilde{s},\tilde{a})$ and $\tilde{s}'$, which indicates whether the reconstructed samples are consistent with the environment dynamics.

\subsection{Baselines}

\begin{itemize}
    \item DGL iteratively updates virtual inputs and labels through an optimization algorithm, so that the gradients generated by the virtual data are close to the real gradients, thereby accurately restoring the original training data.

    \item GradInversion transforms gradient inversion into a joint optimization problem with strong prior constraints, which uses gradient matching as the main loss and BN statistics-total variation-group consistency as regularization to drive multiple randomly initialized noisy images to converge to the high-fidelity original image.

    \item GIFD performs layer-wise optimization of intermediate features in the generator under an L1-ball constraint, which significantly enlarges the search space while suppressing distortion. This strategy enables the attack to achieve pixel-level reconstruction even under distribution shift and noise.

    \item DFLeak progressively integrates high-frequency details from Prior-Free Face Restoration (PFFR) into the reconstructed image in a residual manner during gradient matching iterations. In addition, it introduces a pixel-wise update scheduling strategy, which applies a decay factor to the gradients in the fusion regions to suppress the smoothing side effects of regularization terms, thereby continuously preserving facial textures.
\end{itemize}

\section{Additional experiments}

\subsection{Comparative Experiment of High-dimensional Environment}
\label{supsec:high-dim}
To intuitively evaluate the effectiveness of RGIA, we conduct gradient inversion attacks in RFL based on the AC framework. 
In this RFL-AC setting, each local agent interacts with the environment and updates its local parameters independently, without sharing raw trajectories. During the policy optimization stage, the agent uploads gradients of the value function to the central server for aggregation.
RGIA intercepts these uploaded gradients and aims to reconstruct the private training data, including states, rewards, and actions. 
To quantitatively evaluate the reconstruction performance, we compare RGIA with four representative baselines: DGL, GradInv, GIFD, and DFLeak. 

\subsection{Defense Experiments}
\label{supsec:dp_def}
To evaluate the robustness of RGIA against differential privacy (DP)-based defense mechanisms, we simulate a privacy-preserving training scenario by adding random noise to the gradients during policy training. 
Specifically, we consider two commonly used DP noise distributions: Gaussian noise and Laplace noise. 
The noise is injected into the gradients of the value function network in the AC architecture.
The configuration of each noise is as follows:
\begin{itemize}
    \item Distributions: (1) Gaussian noise: $\mathcal{N}(0, \sigma^2)$ and (2) Laplace noise: $\text{Lap}(0, b)$, where $b = \sqrt{2}\sigma$ to match variance with Gaussian noise.

    \item Mean: 0

    \item Variance ($\sigma^2$) range: $[10^{-1},\ 10^{-2},\ 10^{-3},\ 10^{-4},\ 10^{-5}]$
\end{itemize}
This range allows us to evaluate the impact of different privacy levels, from strong (high noise) to weak (low noise) protection.

\begin{table}[]
\centering
\caption{Laplace noise defense results.}
\begin{tabular}{llcccc}
\hline
Variance & Environment                                                             & Loss$\downarrow$                                                                & Rewards$\uparrow$                                                     & MSE$\downarrow$                                                                  & RA$\uparrow$                                                         \\ \hline
1e-1     & \begin{tabular}[c]{@{}l@{}}Hopper\\ Walker2d\\ Halfcheetah\end{tabular} & \begin{tabular}[c]{@{}c@{}}0.81\\ 0.48\\ 0.91\end{tabular}          & \begin{tabular}[c]{@{}c@{}}21.4\\ 30.3\\ 20.7\end{tabular}  & \begin{tabular}[c]{@{}c@{}}0.31\\ 0.27\\ 0.38\end{tabular}          & \begin{tabular}[c]{@{}c@{}}0.03\\ 0.08\\ 0.09\end{tabular} \\ \hline
1e-2     & \begin{tabular}[c]{@{}l@{}}Hopper\\ Walker2d\\ Halfcheetah\end{tabular} & \begin{tabular}[c]{@{}c@{}}0.56\\ 0.30\\ 0.67\end{tabular}          & \begin{tabular}[c]{@{}c@{}}34.4\\ 40.8\\ 42.5\end{tabular}  & \begin{tabular}[c]{@{}c@{}}0.23\\ 0.20\\ 0.27\end{tabular}          & \begin{tabular}[c]{@{}c@{}}0.13\\ 0.14\\ 0.12\end{tabular} \\ \hline
1e-3     & \begin{tabular}[c]{@{}l@{}}Hopper\\ Walker2d\\ Halfcheetah\end{tabular} & \begin{tabular}[c]{@{}c@{}}0.025\\ 0.027\\ 0.034\end{tabular}       & \begin{tabular}[c]{@{}c@{}}70.3\\ 87.6\\ 65.9\end{tabular}  & \begin{tabular}[c]{@{}c@{}}0.17e-2\\ 0.17e-1\\ 0.18e-1\end{tabular} & \begin{tabular}[c]{@{}c@{}}0.62\\ 0.70\\ 0.68\end{tabular} \\ \hline
1e-4     & \begin{tabular}[c]{@{}l@{}}Hopper\\ Walker2d\\ Halfcheetah\end{tabular} & \begin{tabular}[c]{@{}c@{}}0.24e-2\\ 0.84e-2\\ 0.73e-2\end{tabular} & \begin{tabular}[c]{@{}c@{}}89.7\\ 102.3\\ 86.7\end{tabular} & \begin{tabular}[c]{@{}c@{}}0.11e-3\\ 0.14e-3\\ 0.17e-3\end{tabular} & \begin{tabular}[c]{@{}c@{}}0.79\\ 0.82\\ 0.74\end{tabular} \\ \hline
1e-5     & \begin{tabular}[c]{@{}l@{}}Hopper\\ Walker2d\\ Halfcheetah\end{tabular} & \begin{tabular}[c]{@{}c@{}}0.09-2\\ 0.11e-2\\ 0.11e-2\end{tabular}  & \begin{tabular}[c]{@{}c@{}}94.5\\ 106.9\\ 89.6\end{tabular} & \begin{tabular}[c]{@{}c@{}}0.09e-5\\ 0.13e-5\\ 0.18e-3\end{tabular} & \begin{tabular}[c]{@{}c@{}}0.83\\ 0.83\\ 0.82\end{tabular} \\ \hline
\end{tabular}
\label{tab:lap_nosie}
\end{table}

Table~\ref{tab:lap_nosie} presents the performance of the RGIA attack under Laplace noise defense across three environments (Hopper, Walker2d, and Halfcheetah) with varying noise variances from $1\text{e}^{-1}$ to $1\text{e}^{-5}$. We observe a consistent trend across all metrics, indicating that increasing the strength of differential privacy (i.e., higher noise variance) significantly reduces the attack effectiveness, while weakening the performance of the learned policy.

Furthermore, we investigate the impact of homomorphic encryption (HE) and gradient quantization (GQ) on the performance of RGIA.
To simplify experimental complexity, we implement a lightweight HE scheme that performs gradient encryption via additive operations before uploading the encrypted gradients. Concurrently, we apply GQ to map gradients into low-bit representations (e.g., 8-bit or 4-bit). 
Since these techniques obscure gradient distributions, we propose the gradient matching error (GME) metric to intuitively quantify gradient variations. The combined effects of HE and GQ on RGIA are summarized in Table~\ref{tab:sup_def_vary}.

\begin{table}[]
\caption{The impact of different defense methods on RGIA performance.}
\begin{tabular}{llcccc}
\hline
Environment & Defense method                                                       & GME$\downarrow$                                                        & Loss$\downarrow$                                                    & MSE$\downarrow$                                                        & Rewards$\uparrow$                                                    \\ \hline
Pong        & \begin{tabular}[c]{@{}l@{}}HE\\ GQ (4 bit)\\ GQ (8 bit)\end{tabular} & \begin{tabular}[c]{@{}c@{}}0.89\\ 0.51\\ 0.11\end{tabular}  & \begin{tabular}[c]{@{}c@{}}2.34\\ 1.35\\ 0.27\end{tabular} & \begin{tabular}[c]{@{}c@{}}0.56\\ 0.27\\ 0.071\end{tabular} & \begin{tabular}[c]{@{}c@{}}20.3\\ -17.4\\ 4.3\end{tabular} \\ \hline
Hopper      & \begin{tabular}[c]{@{}l@{}}HE\\ GQ (4 bit)\\ GQ (8 bit)\end{tabular} & \begin{tabular}[c]{@{}c@{}}0.63\\ 0.34\\ 0.09\end{tabular} & \begin{tabular}[c]{@{}c@{}}1.78\\ 0.73\\ 0.25\end{tabular} & \begin{tabular}[c]{@{}c@{}}0.79\\ 0.21\\ 0.038\end{tabular} & \begin{tabular}[c]{@{}c@{}}93.7\\ 34.6\\ 70.3\end{tabular} \\ \hline
\end{tabular}
\label{tab:sup_def_vary}
\end{table}

From the experimental results, homomorphic encryption can effectively defend against data-reconstruction attacks launched by RGIA without compromising the original algorithmic performance (i.e., reward).
However, it is well known that homomorphic encryption introduces substantial computational overhead, which significantly limits its practical application in FRL.
Although gradient quantization can prevent data reconstruction, it adversely affects the original performance of the FRL algorithm.
In particular, the performance is notably degraded when gradient values are converted to low precision (4 bit).

\subsection{Visualization Experiment}
We visualize reconstructed images from five methods (i.e., RGIA, DGL, GrandInv, GIFD, and DFLeak) across four environments: Pong, Breakout, Qbert, and Car Racing.
As shown in Figure~\ref{fig:sup_recon_image}, GrandInv, GIFD, and DFLeak introduce improved feature loss functions, enabling the reconstruction of samples that are more similar to the original images. 
However, since these methods cannot address the pseudo-solution problem in FRL, they occasionally reconstruct failed samples.
In particular, although DFLeak is capable of reconstructing high-quality states in most environments, the reconstructed states in the Qbert environment contain some noise (as indicated by the red ellipse).
Moreover, GrandInv reconstructed high-quality images in the Car Racing environment, but the reconstructed state is not the original state (as indicated by the red box).
\begin{figure}
    \centering
    \includegraphics[width=1.0\linewidth]{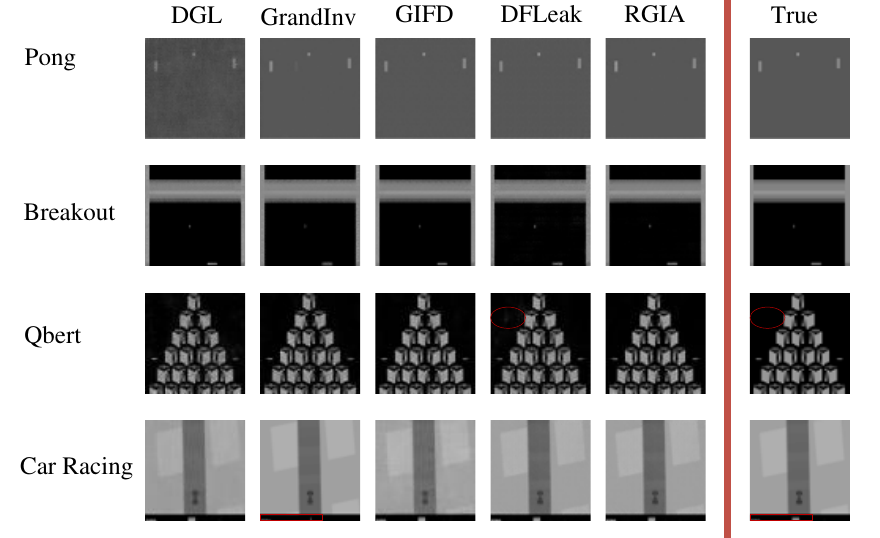}
    \caption{Reconstruction examples of different methods.}
    \label{fig:sup_recon_image}
\end{figure}

\subsection{Low-dimensional Reconstruction Experiment.}
\label{supsec:low_dim}
Since the state in FrozenLake is a discrete variable, we do not directly optimize the state variables but instead handle these variables similarly to discrete actions.
Specifically, we first generate a random vector of shape $[N, M, C]$, where $N$ is the batch size, $M$ is the state dimension, and $C$ is the number of grid value intervals.
Next, we optimize these variables using gradient descent and obtain the reconstructed state by determining the state values via index $C$.
The experimental results are shown in Figure~\ref{fig:discrete_state}, where each result is the average result of 10 random seeds.
As can be seen, all methods achieve remarkably high performance in the FrozenLake environment. 
This is primarily because FrozenLake is a relatively simple and deterministic environment, where the training states and actions can be precisely manipulated without the need for additional constraints.

\begin{figure}[]
    \centering
    \begin{subfigure}[b]{0.33\textwidth}
        \centering
        \includegraphics[width=\textwidth]{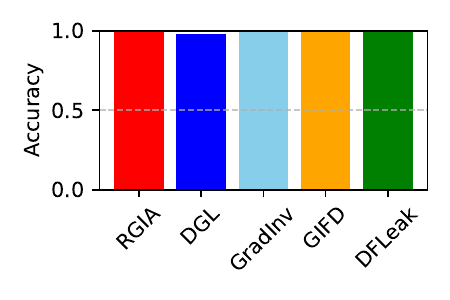}
    \end{subfigure}
    \begin{subfigure}[b]{0.33\textwidth}
        \centering
        \includegraphics[width=\textwidth]{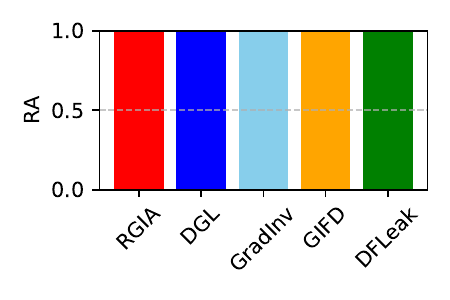}
    \end{subfigure}
    \caption{The comparison results of FrozenLake environments.}
    \label{fig:discrete_state}
\end{figure}

\subsection{Parameter Sensitivity Analysis}
\label{supsec:para_sens}

\begin{table}[]
\caption{Effect of reward range constraint ($\beta$) on reward reconstruction validity.}
\label{tab:beta_sensitivity}
\begin{tabular}{l|c|c}
\hline
        & Reward Error $\downarrow$ & Invalid $\tilde{r}$ Ratio(\%)$\downarrow$ \\
$\beta$ & Pong/Qbert   & Pong/Qbert                             \\ \hline
0.0     & 0.33/0.28    & 0.27/0.19                              \\
0.01    & 0.29/0.19    & 0.12/0.07                              \\
0.1     & 0.14/0.13    & 0.00/0.00                              \\
1.0     & 0.08/0.09    & 0.00/0.00                              \\
10.0    & 0.08/0.08    & 0.00/0.00                              \\ \hline
\end{tabular}
\end{table}

Similar to the sensitivity experiment on $\alpha$, we vary $\beta$ while keeping $\alpha = 1.0$ and $\gamma = 1.0$ fixed. 
Experimental results include the absolute reward reconstruction error and the proportion of invalid samples, where invalid samples are defined as reward values $\tilde{r}$ outside the legal range $[r_{\min}, r_{\max}]$ imposed by the environment.
As shown in Table~\ref{tab:beta_sensitivity}, varying $\beta$ within the range of 0.0 to 10.0 significantly alters the reward error and the proportion of invalid reconstruction samples.
In particular, a small value (e.g., $\beta=0.01$) effectively eliminates invalid samples. 
Moreover, experimental results show that, unlike state regularization, reward constraints are not highly sensitive to precise tuning.

To assess the contribution of the dynamics consistency regularization, we conduct a sensitivity analysis by varying the regularization weight $\gamma \in \{0.0, 0.01, 0.1, 1.0, 10.0\}$, while keeping the other regularization terms fixed ($\alpha = 1.0, \beta = 1.0$). 
This setup allows us to evaluate the effect of the transition prior that penalizes inconsistencies between the reconstructed next state $\tilde{s}'$ and the prediction of a pre-trained forward model $f(\tilde{s}, \tilde{a})$.
Here, we use three metrics: Transition Error (TS), Silhouette Score (SS), and Covariance Determinant (CD) to comprehensively evaluate the impact of different $\gamma$ values.

\begin{table}[]
\centering
\caption{Effect of dynamics consistency regularization ($\gamma$) on transition realism and sample concentration. The TE values reported in the table are the actual values scaled by a factor of $10^{-4}$.}
\label{tab:gamma_sensitivity}
\begin{tabular}{lccc}
\hline
         & TE$\downarrow$                 & SS$\uparrow$                 & CD$\downarrow$                     \\
$\gamma$ & Pong/Qbert         & Pong/Qbert         & Pong/Qbert             \\ \hline
0.0      & 0.31/0.35          & 0.28/0.19          & 5.3e-2/4.9e-2          \\
0.01     & 0.21/0.27          & 0.42/0.39          & 3.7e-2/4.1e-2          \\
0.1      & 0.12/0.18          & 0.66/0.59          & 1.5e-2/2.1e-2          \\
1.0      & \textbf{0.10/0.13} & \textbf{0.72/0.68} & \textbf{1.2e-3/1.3e-3} \\
10.0     & 0.11/0.13          & 0.69/0.66          & 2.1e-3/1.9e-3          \\ \hline
\end{tabular}
\end{table}

Table~\ref{tab:gamma_sensitivity} presents experimental results for varying $\gamma$ values in RGIA reconstruction.
When no dynamics prior is applied ($\gamma = 0.0$), the reconstructed samples exhibit larger TE, lower SS, and higher CD compared to those with constraints. 
These results indicate that many generated trajectories violate environment dynamics and are highly dispersed in the state space.
As $\gamma$ increases, three metrics improve significantly. 
For instance, at $\gamma = 1.0$, the TE decreases by over 67$\%$, and the SS increases to 0.72. 
Additionally, CD drops by nearly two orders of magnitude, demonstrating a pronounced shrinkage of the solution space.
Interestingly, further increasing $\gamma$ to 10.0 results in degraded metrics (i.e., increased TE and CD, and decreased SS).
This phenomenon suggests that overly aggressive regularization may over-constrain the optimization process, which reduces flexibility and diversity.

\subsection{State $\mu_s$}
\label{supsec:state_prior}
To investigate the impact of deviations in the state prior $\mu_s$ used for regularization in RGIA on attack performance, we design an ablation experiment with biased priors.
Specifically, the experiment involves a state-free prior and mean priors derived from datasets of varying sizes.
For the latter, we select 500, 1000, 2000, 3000, 4000 and all training samples to calculate the mean prior.
Ablation experiments are conducted in the Pong, Qbert, Hopper, and Walker2d environments, and the evaluation is performed using the SSIM, MSE, and RA metrics to quantify performance impact.
The experimental results are shown in Figure~\ref{fig:state_mu}, where $\mu_0$ denotes the state regularization not applied, and $u_5$, $u_{10}$, $u_{20}$, $u_{30}$, $u_{40}$, $u_{T}$ represent the state priors calculated with different data volumes, respectively.

\begin{figure}[]
    \centering
    \begin{subfigure}[b]{0.22\textwidth}
        \centering
        \includegraphics[width=\textwidth]{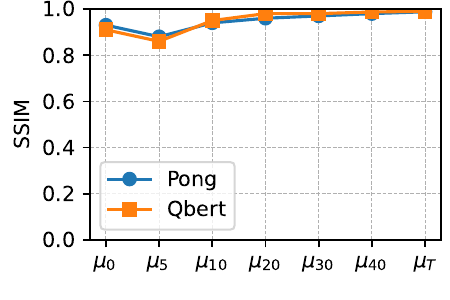}
    \end{subfigure}
    \begin{subfigure}[b]{0.22\textwidth}
        \centering
        \includegraphics[width=\textwidth]{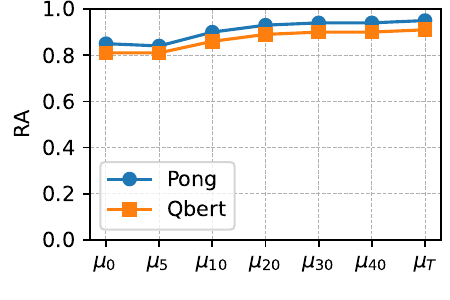}
    \end{subfigure}
    \begin{subfigure}[b]{0.22\textwidth}
        \centering
        \includegraphics[width=\textwidth]{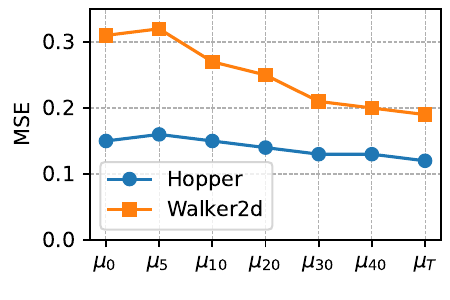}
    \end{subfigure}
    \begin{subfigure}[b]{0.21\textwidth}
        \centering
        \includegraphics[width=\textwidth]{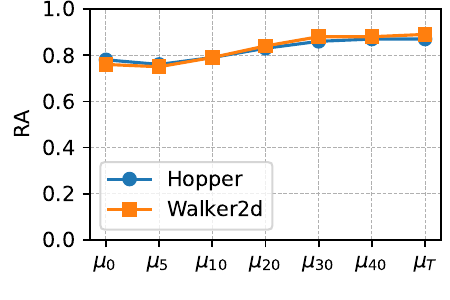}
    \end{subfigure}
    \caption{The impact of state prior $\mu_s$ on the performance of RGIA.}
    \label{fig:state_mu}
\end{figure}

Experimental results demonstrate that higher-quality priors can significantly improve the accuracy of data reconstruction.
Specifically, in the Pong and Qbert environments, as the amount of data used to estimate the state prior increases, the reconstructed states become increasingly similar to the original ones, as measured by SSIM. 
Meanwhile, the action recovery accuracy (RA) also improves significantly.
However, when the prior data exceeds 2000 samples, neither SSIM nor RA increases significantly.
This plateau may be explained by the fact that 2000 samples are sufficient to accurately estimate the state prior distribution $\mu_s$.
An interesting exception occurs when only 500 samples are used to calculate $\mu_s$: in this case, the reconstruction ability of RGIA is actually worse than that without state regularization. 
This decline likely results from a significant deviation of $\mu_s$ from the true distribution, which causes the optimization process to converge to a semantically incorrect solution.
Similarly, Hopper and Walker2d show a similar trend. 
When the sample size exceeds 3000, $\mu_s$ no longer has a significant effect on the state MSE and RA.

In summary, our ablation study demonstrates that while an accurate state prior $\mu_s$ is crucial for RGIA to achieve optimal attack performance, only a moderate amount of data (typically ranging from 2000 to 3000 samples across environments, representing a mere 0.3$\%$ of the 1M total training dataset) is sufficient to estimate a prior that saturates the reconstruction accuracy improvements.

\subsection{Effect of transition model Training Data Size on RGIA Performance.}
\label{supsec:dynamic_model_affect}
To evaluate the impact of transition model accuracy on the proposed RGIA method, we conduct an ablation study by varying the amount of data used to train the transition model $f$ in the dynamic consistency regularizer $\mathcal{R}_{f} = \| f(s,a) - s' \|^2$.
For each environment (Walker2d and Breakout), we train $f$ using different amounts of prior data $\{500, 1000, 2000, 3000, 4000, all\}$, where $all$ denotes that the transition model is trained using the entire dataset.
The model architecture and optimization settings are kept identical across all data settings, and the remaining RGIA hyperparameters are fixed to their default values.
After training $f$, we run RGIA with the learned transition model and evaluate reconstruction quality on 1,000 randomly selected target gradients.
For each data setting, the experiment is repeated with 10 different random seeds, and we report the mean values for three metrics: state MSE, action recovery accuracy (RA), and state transition error (TE).

\begin{table}[]
\caption{The influence of transition models on RGIA reconstruction capabilities. $0$ means that the dynamic consistency regularization is not used, and 1M means that the transition model is trained by all training data.}
\begin{tabular}{c|ccc|ccc}
\hline
\multirow{2}{*}{Numbers} & \multicolumn{3}{c|}{Walker2d}                   & \multicolumn{3}{c}{Breakout}                    \\
                         & MSE$\downarrow$ & RA$\uparrow$ & TE$\downarrow$ & MSE$\downarrow$ & RA$\uparrow$ & TE$\downarrow$ \\ \hline
0                        & 0.29e-7         & 0.77         & 0.16e-5        & 0.83e-6         & 0.82         & 0.23e-4        \\
500                      & 0.11e-6         & 0.77         & 0.18e-5        & 0.84e-6         & 0.80         & 0.27e-4        \\
1000                     & 0.28e-7         & 0.81         & 0.14e-5        & 0.80e-6         & 0.84         & 0.22e-4        \\
2000                     & 0.24e-7         & 0.84         & 0.13e-5        & 0.79e-6         & 0.87         & 0.13e-4        \\
3000                     & 0.21e-7         & 0.88         & 0.12e-5        & 0.79e-6         & 0.88         & 0.12e-4        \\
4000                     & 0.20e-7         & 0.89         & 0.12e-5        & 0.71e-6         & 0.87         & 0.11e-4        \\
1M                       & 0.19e-7         & 0.91         & 0.11e-5        & 0.69e-6         & 0.88         & 0.09e-4        \\ \hline
\end{tabular}
\label{tab:dynamic_model}
\end{table}

As shown in Table~\ref{tab:dynamic_model}, increasing the number of samples used to train the transition model $f$ generally improves RGIA performance across state MSE, RA, and TE.
For Walker2d, RA rises steadily from 0.77 without dynamic consistency regularization to 0.88 with 3,000 samples, while TE decreases from $0.16\times10^{-5}$ to $0.12\times10^{-5}$.
The MSE shows minor fluctuations at small sample sizes but converges to the lowest value ($0.21\times10^{-7}$) with the largest dataset.
For Breakout, RA improves from 0.82 to 0.88, and TE drops markedly from $0.23\times10^{-4}$ to $0.12\times10^{-4}$, with MSE gradually decreasing from $0.83\times10^{-6}$ to $0.79\times10^{-6}$.
These results indicate that larger training sets enable the transition model to more accurately approximate environment transitions, thereby reducing reconstruction error and improving both state and action recovery in RGIA attacks.
However, the improvement in each metric becomes less pronounced when the sample size reaches 3,000.
Even when the transition model is trained on the entire training dataset, there is no significant improvement in MSE, RA, or TE.


\end{document}